\def\sint{\begingroup\textstyle\int\endgroup} 
\newtheorem{theorem}{Theorem}
\newtheorem{corollary}[theorem]{Corollary}
\newtheorem{lemma}[theorem]{Lemma}
\DeclareMathOperator*{\sparse}{Sp}
\DeclareMathOperator*{\Log}{Log}
\DeclareMathOperator*{\Exp}{Exp}
\DeclareMathOperator*{\genlinear}{GL}
\DeclareMathOperator*{\argmin}{argmin}
\DeclareMathOperator*{\trace}{Tr}
\DeclareMathOperator*{\expmap}{exp} 
\DeclareMathOperator*{\logmap}{log}
\newcommand{\spd}[1]{\mathcal{S}^{#1}_{+}}
\newcommand{\atom}{B}
\newcommand{\datapt}{X}
\newcommand{\dataZ}{S}
\newcommand{\dataset}{\mathfrak{X}}
\newcommand{\half}{\frac{1}{2}}
\newcommand{\norm}[2]{\left\|{#1}\right\|_{#2}}
\newcommand{\fnorm}[1]{\norm{#1}{\text{F}}}
\newcommand{\enorm}[1]{\norm{#1}{2}}
\newcommand{\riem}{d_{\mathcal{R}}}
\newcommand{\dict}{\mathbf{B}}
\newcommand{\set}[1]{\left\{{#1}\right\}}
\newcommand{\gdatapt}{z}
\newcommand{\loss}{\mathcal{L}}
\newcommand{\lerm}{d_{\text{le}}}
\newcommand{\sym}[1]{\mathcal{S}^{#1}}
\newcommand{\logdet}[1]{\log\left|{#1}\right|}
\newcommand{\stein}{d_{\text{S}}}
\newcommand{\burg}{d_{\text{B}}}
\newcommand{\eye}{I}
\newcommand{\reals}[1]{\mathbb{R}^{#1}}
\newcommand{\nlmin}{\min\nolimits}
\newcommand{\nlsum}{\sum\nolimits}
\newcommand{\ip}[2]{\left\langle {#1},\, {#2}\right\rangle}
\newcommand{\proj}[1]{\mathscr{P}[#1]}
\newcommand{\semipd}[1]{\mathcal{S}^{#1}}
\newcommand{\manifold}{\mathcal{M}}
\newcommand{\tangent}[1]{T_{#1}}
\newcommand{\tangsp}{\tangent{P}\manifold}
\DeclareMathOperator*{\grad}{grad}
\newcommand{\vectrans}[2]{\mathfrak{T}_{#1}(#2)}
\DeclareMathOperator*{\abs}{abs}
\renewcommand{\textcolor}[1]{}
\begin{document}
%
\title{Riemannian Dictionary Learning and Sparse Coding for Positive Definite Matrices}
\author{Anoop~Cherian\quad\quad Suvrit~Sra
\thanks{Anoop Cherian is with the ARC Centre of Excellence for Robotic Vision, Australian National University, Canberra, Australia\quad Email: anoop.cherian@anu.edu.au.\protect\\}
\thanks{Suvrit Sra is with the Massachusetts Institute of Technology.\protect\\
E-mail: suvrit@mit.edu}
}



\maketitle
\begin{abstract}
Data encoded as symmetric positive definite (SPD) matrices frequently arise in many areas of computer vision and machine learning. While these matrices form an open subset of the Euclidean space of symmetric matrices, viewing them through the lens of non-Euclidean Riemannian geometry often turns out to be better suited in capturing several desirable data properties. However, formulating classical machine learning algorithms within such a geometry is often non-trivial and computationally expensive. Inspired by the great success of dictionary learning and sparse coding for vector-valued data, our goal in this paper is to represent data in the form of SPD matrices as sparse conic combinations of SPD atoms from a learned dictionary via a Riemannian geometric approach. To that end, we formulate a novel Riemannian optimization objective for dictionary learning and sparse coding in which the representation loss is characterized via the affine invariant Riemannian metric. We also present a computationally simple algorithm for optimizing our model. Experiments on several computer vision datasets demonstrate superior classification and retrieval performance using our approach when compared to sparse coding via alternative non-Riemannian formulations. 

\end{abstract}

\begin{keywords}
Dictionary learning, Sparse coding, Riemannian distance, Region covariances
\end{keywords}

\IEEEpeerreviewmaketitle



\ifCLASSOPTIONcaptionsoff
  \newpage
\fi

\section{Introduction}
\label{sec:intro}
\IEEEPARstart{S}{ymmetric} positive definite (SPD) matrices play an important role as data descriptors in several computer vision applications, for example in the form of region covariances~\cite{porikli1}. Notable examples where such descriptors are used include object recognition~\cite{cvpr_hartley}, human detection and tracking~\cite{porikli3}, visual surveillance~\cite{cherian2011dirichlet}, 3D object recognition~\cite{fehr2012compact}, among others. Compared with popular vector space descriptors, such as bag-of-words, Fisher vectors, etc., the second-order structure offered by covariance matrices is particularly appealing. For instance, covariances conveniently fuse multiple features into a compact form independent of the number of data points. By choosing appropriate features, this fusion can be made invariant to affine distortions~\cite{ma2012affine}, or robust to static image noise and illumination variations. Moreover, generating these descriptors is easy, for instance using integral image transforms~\cite{porikli3}.

\begin{figure}[ht]
	\centering	
    \includegraphics{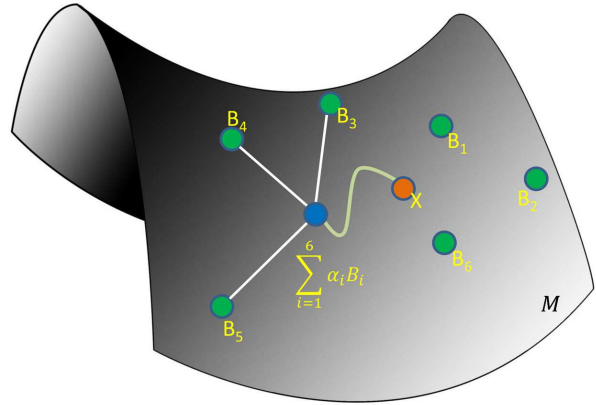}	  
    \caption{\small An illustration of our Riemannian dictionary learning and sparse coding. For the SPD manifold $\manifold$ and learned SPD basis matrices $\atom_i$ on the manifold, our sparse coding objective seeks a non-negative sparse linear combination $\sum_i\alpha_i\atom_i$ of the $\atom_i$'s that is closest (in a geodesic sense) to the given input SPD matrix $X$. }	  
    \label{fig:rsp}		
\end{figure}

We focus on SPD matrices for dictionary learning with sparse coding (DLSC) -- a powerful data representation tool in computer vision and machine learning~\cite{mairalbook} that enables state-of-the-art results for a variety of applications~\cite{guha2012learning,wright2009robust}. 

Given a training set, traditional (Euclidean) dictionary learning problem seeks an overcomplete set of basis vectors (the dictionary) that can be linearly combined to sparsely represent each input data point; finding a sparse representation given the dictionary is termed~\emph{sparse coding}. While sparse coding was originally conceived for Euclidean vectors, there have been recent extensions of the setup to other data geometries, such as  histograms~\cite{cherian2014nearest}, Grassmannians~\cite{harandi2013dictionary}, and SPD matrices~\cite{harandi2012sparse,lilog,sivalingam2010tensor,sra2011generalized}. The focus of this paper is on dictionary learning and sparse coding of SPD matrix data using a novel mathematical model inspired by the geometry of SPD matrices.

SPD matrices are an open subset of the Euclidean space of symmetric matrices. This may lead one to believe that essentially treating them as vectors may suffice. However, there are several specific properties that applications using SPD matrices demand. For example, in DT-MRI the semidefinite matrices are required to be at infinite distances from SPD matrices~\cite{arsigny2006log}. Using this and other geometrically inspired motivation a variety of non-Euclidean distance functions have been used for SPD matrices---see e.g.,~\cite{arsigny2006log,sra2011positive,moakher2006symmetric}. The most widely used amongst these is the affine invariant Riemannian metric~\cite{moakher2006symmetric,pennec2006riemannian}, the only intrinsic Riemannian distance that corresponds to a geodesic distance on the manifold of SPD matrices. 

In this paper, we study dictionary learning and sparse coding of SPD matrices in their natural Riemannian geometry. Compared to the Euclidean setup, their Riemannian geometry, however, poses some unique challenges: (i) the manifold defined by this metric is (negatively) curved~\cite{rothaus1960domains}, and thus the geodesics are no more straight-lines; and (ii) in contrast to the Euclidean DLSC formulations, the objective function motivated by the Riemannian distance is not convex in either its sparse coding part or in the dictionary learning part. We present some theoretical properties of our new DLSC formulation and mention a situation of purely theoretical interest where the formulation is convex. Figure~\ref{fig:rsp} conceptually characterizes our goal. 
%

The key contributions of this paper are as follows.
\begin{itemize}
\item \emph{Formulation:} We propose a new model to learn a dictionary of SPD atoms; each data point is represented as a nonnegative sparse linear combination of SPD atoms from this dictionary. The quality of the resulting representation is measured by the squared intrinsic Riemannian distance. 
\item \emph{Optimization:} The main challenge in using our formulation is its higher computational cost relative to Euclidean sparse coding. However, we describe a simple and effective approach for optimizing our objective function. Specifically, we propose a dictionary learning algorithm on SPD atoms via conjugate gradient descent on the product manifold generated by the SPD atoms in the dictionary.
\end{itemize}
A forerunner to this paper appeared in~\cite{cherian2014riemannian}. The current paper differs from our conference paper in the following major aspects: (i) we propose a novel dictionary learning formulation and an efficient solver for it; and (ii) extensive new experiments using our dictionary learning are also included and the entire experimental section re-evaluated under our new setup while also including other datasets and evaluation metrics.

To set the stage for presenting our contributions, we first review key tools and metrics from Riemannian geometry that we use to develop our ideas. Then we survey some recent methods suggested for sparse coding using alternative similarity metrics on SPD matrices. Throughout we work with real matrices; extension to Hermitian positive definite matrices is straightforward. The space of $d\times d$ SPD matrices is denoted as $\spd{d}$, symmetric matrices by $\sym{d}$, and the space of (real) invertible matrices by $\genlinear(d)$. By $\Log(X)$, for $X\in\spd{}$, we mean the principal matrix logarithm and $\logdet{X}$ denotes the scalar logarithm of the matrix determinant.

\section{Preliminaries}
\label{sec:background}
We provide below a brief overview of the Riemannian geometry of SPD matrices. A manifold $\manifold$ is a set of points endowed with a locally-Euclidean structure. A~\emph{tangent vector} at $P\in\manifold$ is defined as the tangent to some curve in the manifold passing through $P$. A tangent space $\tangent{P}\manifold$ defines the union of all such tangent vectors to all possible such curves passing through $P$; the point $P$ is termed the~\emph{foot} of this tangent space. The dimensionality of $\tangsp$ is the same as that of the manifold. It can be shown that the tangent space is isomorphic to the Euclidean space~\cite{absil2009optimization}; thus, they provide a locally-linear approximation to the manifold at its foot. 

A manifold becomes a Riemannian manifold if its tangent spaces are endowed with a smoothly varying inner product. The Euclidean space, endowed with the classical inner product defined by the trace function (i.e., for two points $X,Y\in\semipd{d}, \langle X, Y\rangle = \trace(XY)$), is a Riemannian manifold. Recall that an SPD matrix has the property that all its eigenvalues are real and positive, and it belongs to the interior of a convex self-dual cone. Since for $d\times d$ SPD matrices, this cone is a subset of the $\half d(d+1)$-dimensional Euclidean space of symmetric matrices, the set of SPD matrices naturally forms a Riemannian manifold under the trace metric. However, under this metric, the SPD manifold is not~\emph{complete}~\footnote{A space is a complete metric space if all~\emph{Cauchy} sequences are convergent within that space.}. This is because, the trace metric does not enclose all Cauchy sequences originating from the interior of the SPD cone~\cite{cheng2013novel}.

A possible remedy is to change the geometry of the manifold such that positive semi-definite matrices (which form the closure of SPD matrices for Cauchy sequences) are at an infinite distance to points in the interior of the SPD cone. This can be achieved by resorting to the the classical~\emph{log-barrier function} $g(P) = -\log\det(P)$, popular in the optimization community in the context of interior point methods~\cite{nesterov2002riemannian}. \textcolor{blue}{The trace metric can be modified to the new geometry induced by the barrier function by incorporating the curvature through its Hessian operator at point $P$ in the direction $Z$ given by $H_P(Z) = g''(P)(Z) = P^{-1} Z P^{-1}$.} The Riemannian metric at $P$ for two points $Z_1, Z_2\in\tangsp$ is thus defined as:
\begin{equation}
\langle Z_1, Z_2\rangle_P = \langle H_PZ_1, Z_2 \rangle = \trace(P^{-1}Z_1P^{-1}Z_2).
\label{eq:riem_metric}
\end{equation}

There are two fundamental operations that one needs for computations on Riemannian manifolds:  (i) the exponential map $\Exp_P:\sym{d}\to\spd{d}$; and (ii) the logarithmic map $\Log_P=\Exp_P^{-1}:\spd{d}\to\sym{d}$, where $P\in\spd{d}$. The former projects a symmetric point on the tangent space onto the manifold (termed a~\emph{retraction}), the latter does the reverse. Note that these maps depend on the point $P$ at which the tangent spaces are computed. In our analysis, we will be measuring distances assuming $P$ to be the identity matrix, $\eye$, in which case we will omit the subscript.

Note that the Riemannian metric provides a measure for computing distances on the manifold. Given two points on the manifold, there are infinitely many paths connecting them, of which the shortest path is termed the~\emph{geodesic}. It can be shown that the SPD manifold under the Riemannian metric in~\eqref{eq:riem_metric} is non-positively curved (Hadamard manifold) and has a unique geodesic between every distinct pair of points~\cite{lang2012fundamentals}[Chap. 12],~\cite{bhatia07}[Chap. 6]. For $X,Y\in\spd{d}$, there exists a closed form for this~\emph{geodesic distance}, given by
\begin{equation}
  \label{eq:riem}
\riem(X,Y) = \fnorm{\Log(X^{-1/2} Y X^{-1/2})},
\end{equation}
where $\Log$ is the matrix logarithm. It can be shown that this distance is invariant to affine transformations of the input matrices~\cite{pennec2006riemannian} and thus is commonly referred to as the~\emph{Affine invariant Riemannian metric}. In the sequel, we will use~\eqref{eq:riem} to measure the distance between input SPD matrices and their sparse coded representations obtained by combining dictionary atoms.

\section{Related Work}
\label{sec:relatedwork}
Dictionary learning and sparse coding of SPD matrices has received significant attention in the vision community due to the performance gains it brings to the respective applications~\cite{mairalbook,harandi2012sparse,sra2011generalized}. Given a training dataset $\mathcal{X}$, the DLSC problem seeks a dictionary $\dict$ of basis atoms, such that each data point $x\in\mathcal{X}$ can be approximated by a sparse linear combination of these atoms while minimizing a suitable loss function. Formally, the DLSC problem can be abstractly written as
\begin{equation}
\min_{\dict, \theta_x \forall x\in\mathcal{X}}\sum_{\forall x\in\mathcal{X}} \loss(x, \dict, \theta_x) + \lambda\sparse(\theta_x),
\label{eq:abstract_eqn}
\end{equation}
where the loss function $\loss$ measures the approximation quality obtained by using the ``code'' $\theta_x$, while $\lambda$ regulates the impact of the sparsity penalty $\sparse(\theta_x)$. 

As alluded to earlier, the manifold geometry hinders a straightforward extension of classical DLSC techniques (such as~\cite{aharon2006img,denoising2006}) to data points drawn from a manifold. Prior methods typically use surrogate similarity distances that bypass the need to operate within the intrinsic Riemannian geometry, e.g., (i) by adapting information geometric divergence measures such as the log-determinant divergence or the Stein divergence, (ii) by using extrinsic metrics such as the log-Euclidean metric, and (iii) by relying on the kernel trick to embed the SPD matrices into a suitable RKHS. We briefly review each of these schemes below.


\noindent{\bf{Statistical measures:}} In~\cite{sivalingam2010tensor} and~\cite{sivalingam2011positive}, a DLSC framework is proposed based on the log-determinant divergence (Burg loss) to model the loss. \textcolor{blue}{For two SPD matrices $X,Y\in\spd{d}$, this divergence has the following form $\burg(X,Y)=\trace(XY^{-1}) -\logdet{XY^{-1}} -d$.} Since this divergence acts as a base measure for the Wishart distribution~\cite{cherian2011dirichlet}---a natural probability density on SPD matrices---a loss defined using it is statistically well-motivated. The sparse coding formulation using this loss reduces to a \textsc{MaxDet} optimization problem~\cite{sivalingam2010tensor} and is solved using interior-point methods. Unsurprisingly, the method is often seen to be computationally demanding even for moderately sized covariances (more than $5\times 5$). Ignoring the specific manifold geometry of SPD matrices, one may directly extend the Euclidean DLSC schemes to the SPD setting. However, a na\"{i}ve use of Euclidean distance on SPD matrices is usually found inferior in performance. It is argued in~\cite{sra2011generalized} that approximating an SPD matrix as sparse conic combinations of positive semi-definite rank-one outer-products of the Euclidean dictionary matrix atoms leads to improved performance under the Euclidean loss. However, the resulting dictionary learning subproblem is nonconvex and the reconstruction quality is still measured using a Euclidean loss. Further, discarding the manifold geometry is often seen to showcase inferior results compared to competitive methods~\cite{cherian2014riemannian}.


\noindent{\bf{Differential geometric schemes:}} Among the computationally efficient variants of Riemannian metrics, one of the most popular is the log-Euclidean metric $\lerm$~\cite{arsigny2006log} defined for $X,Y\in\spd{d}$ as $\lerm(X,Y) := \fnorm{\Log(X) - \Log(Y)}$. The $\Log$ operator maps an SPD matrix isomorphically and diffeomorphically into the flat space of symmetric matrices; the distances in this space are Euclidean. DLSC with the squared log-Euclidean metric has been proposed in the past~\cite{guo2010action} with promising results. A similar framework was suggested recently~\cite{ho2013nonlinear} in which a local coordinate system is defined on the tangent space at the given data matrix. While, their formulation uses additional constraints that make their framework coordinate independent, their scheme restricts sparse coding to specific problem settings, such as an affine coordinate system.

\noindent{\bf{Kernelized Schemes:}} In~\cite{harandi2012sparse}, a kernelized DLSC framework is presented for SPD matrices using the Stein divergence~\cite{sra2011positive} for generating the underlying kernel function. \textcolor{blue}{For two SPD matrices $X,Y$, the Stein divergence is defined as $\stein(X,Y) = \logdet{\half (X+Y)}-\half\logdet{XY}$}. As this divergence is a statistically well-motivated similarity distance with strong connections to the natural Riemannian metric (~\cite{cichocki2015log,sra2011positive}) while being computationally superior, performances using this measure are expected to be similar to those using the Riemannian metric~\cite{cherian2013jensen}. However, this measure does not produce geodesically exponential kernels for all bandwidths~\cite{sra2011positive} making it less appealing theoretically. In~\cite{cvpr_hartley,lilog} kernels based on the log-Euclidean metric are proposed.  A general DLSC setup is introduced for the more general class of Riemannian manifolds in~\cite{harandi2009riemannian}. The main goal of all these approaches is to linearize the curved manifold geometry by projecting the SPD matrices into an infinite dimensional Hilbert space as defined by the respective kernel. However, as recently shown theoretically in~\cite{cvpr_hartley,feragen2014geodesic} most of the curved Riemannian geometries (including the the span of SPD matrices) do not have such kernel maps, unless the geometry is already isometric to the Euclidean space (as in the case of the log-Euclidean metric). This result severely restricts the applicability of traditional kernel methods to popular Riemannian geometries (which are usually curved), thereby providing strong motivation to study the standard machine learning algorithms within their intrinsic geometry --- as is done in the current paper.


In light of the above summary, our scheme directly uses the affine invariant Riemannian metric to design our sparse reconstruction loss. To circumvent the computational difficulty we propose an efficient algorithm based on spectral projected gradients for sparse coding, while we use an adaptation of the non-linear conjugate gradient on manifolds for dictionary learning. Our experiments demonstrate that our scheme is computationally efficient and provides state of the art results on several computer vision problems that use covariance matrices.


\section{Problem Formulation}
\label{sec:dlsc}
Let $\dataset=\set{\datapt_1,\datapt_2,\cdots, \datapt_N}$ denote a set of $N$ SPD data matrices, where each $\datapt_i\in\spd{d}$. Let $\manifold^d_n$ denote the product manifold obtained from the Cartesian product of $n$ SPD manifolds, i.e., $\manifold^d_n=\spd{d}\times^n\subset\reals{d\times d\times n}$. Our goals are (i) to learn a third-order tensor (dictionary) $\dict\in\manifold^d_n$ in which each slice represents an SPD dictionary atom $\atom_j\in\spd{d},j=1,2,\cdots, n$; and (ii) to approximate each $\datapt_i$ as a sparse conic combination of atoms in $\dict$; i.e., $\datapt_i \sim \dict\alpha_i$ where $\alpha_i\in\reals{n}_{+}$ and $\dict v := \sum_{i=1}^n v_i\atom_i$ for an $n$-dimensional vector $v$. With this notation, our joint DLSC objective is
\begin{align}
	\min_{\substack{\dict\in\manifold^d_n,\\\alpha\in\reals{n\times N}_+}} \half\sum_{j=1}^N\riem^2\left(\datapt_j,\dict\alpha_j\right) + \sparse(\alpha_j) + \Omega(\dict),
	\label{eq:problem_p1}
\end{align}
where $\sparse$ and $\Omega$ are regularizers on the coefficient vectors $\alpha_j$ and the dictionary tensor respectively. 

Although formulation~\eqref{eq:problem_p1} may look complicated, it is a direct analogue of the vectorial DLSC setup to matrix data. For example, instead of learning a dictionary matrix in the vectorial DLSC, we learn a third-order tensor dictionary since our data $X$ are now matrices. The need to constrain the sparse coefficients to the non-negative orthant is required to make sure the linear combination of SPD atoms stays within the SPD cone. 
However, in contrast to the vectorial DLSC formulations for which the subproblems on the dictionary learning and sparse coding are convex separately, the problem in~\eqref{eq:problem_p1} is neither convex in itself, nor are its subproblems convex.

From a practical point of view, this lack of convexity it is not a significant concern as all we need is a set of dictionary atoms which can sparse code the input. To this end, we propose below an alternating minimization (descent) scheme that alternates between locally solving the dictionary learning and sparse coding sub-problems, while keeping fixed the variables associated with the other. \textcolor{blue}{A full theoretical analysis of the convergence of this nonconvex problem is currently beyond the scope of this paper and of most versions of nonconvex analysis known to us. However, what makes the method interesting and worthy of future analysis is that empirically it converges quite rapidly as shown in Figure 5.}

%


\subsection{Dictionary Learning Subproblem}
\label{sec:dictionary_learning_subproblem}
Assuming the coefficient vectors $\alpha$ available for all the data matrices, the subproblem for updating the dictionary atoms can be separated from~\eqref{eq:problem_p1} and written as:
\begin{align}	
		&\min_{\dict\in\manifold^d_n} \Theta(\dict):=\half\sum_{j=1}^N\riem^2\left(\datapt_j, \dict\alpha_j\right) + \Omega(\dict),\\
			&=\half\sum_{j=1}^N\fnorm{\Log\left(\datapt_j^{-\half}\left(\dict\alpha_j\right)\datapt_j^{-\half}\right)}^2 + \Omega(\dict).
	\label{eq:dict_subproblem_1}
\end{align}


\subsubsection{Regularizers} Before delving into algorithms for optimizing~\eqref{eq:dict_subproblem_1}, let us recall a few potential regularizers $\Omega$ on the dictionary atoms, which are essential to avoid overfitting the dictionary to the data. For SPD matrices, we have several regularizers available, such as: (i) the largest eigenvalue regularizer $\Omega(\dict)=\sum_{i}\enorm{\atom_i}^2$, (ii) deviation of the dictionary from the identity matrix $\Omega(\dict) = \sum_i \fnorm{\atom_i - \eye}^2$, (iii) the Riemannian elasticity regularizer~\cite{pennec2005riemannian} which measures the Riemannian deformation of the dictionary from the identity matrix $\Omega(\dict) = \sum_{i}\fnorm{\log(\atom_i)-\log(\eye)}^2=\riem(\atom_i, \eye)^2$, and (iv) the trace regularizer, i.e., $\Omega(\dict) = \lambda_{\dict}\sum_{i=1}^n\trace(\atom_i)$, for a regularization parameter $\lambda_{\dict}$. In the sequel, we use the unit-trace regularizer as it is simpler and performs well empirically.


\subsubsection{Optimizing Dictionary Atoms} 

Among several first-order alternatives for optimizing over the SPD atoms (such as the steepest-descent, trust-region methods~\cite{absil2007trust}, etc.), the Riemannian Conjugate Gradient (CG) method~\cite{absil2009optimization}[Chap.8], was found to be empirically more stable and faster. Below, we provide a short exposition of the CG method in the context of minimizing over $\dict$ which belongs to an SPD product manifold.

For an arbitrary non-linear function $\theta(x),\ x\in\reals{n}$, the CG method uses the following recurrence at step $k+1$
\begin{equation}
x_{k+1} = x_k + \gamma_k\xi_k,
\label{eq:cg-base}
\end{equation}
where the direction of descent $\xi_k$ is
\begin{equation}
\xi_k = -\grad \theta(x_k) + \mu_k\xi_{k-1},
\label{eq:cg-dirs}
\end{equation}
with $\grad \theta(x_k)$ defining gradient of $\theta$ at $x_k$ ($\xi_0=-\grad \theta(x_0)$), and $\mu_k$ given by
\begin{equation}
\mu_k = \frac{(\grad \theta(x_k))^T(\grad \theta(x_k) - \grad \theta(x_{k-1}))}{\grad \theta(x_{k-1})^T \grad \theta(x_{k-1})},
\label{eq:next-step}
\end{equation}
The step-size $\gamma_k$ in~\eqref{eq:cg-base} is usually found via an efficient line-search method~\cite{bertsekas99}. It can be shown that~\cite{bertsekas99}[Sec.1.6] when $\theta$ is quadratic with a Hessian $Q$, the directions generated by~\eqref{eq:cg-dirs} will be Q-conjugate to previous directions of descent $\xi^0, \xi^1,\cdots,\xi^{k-1}$; thereby~\eqref{eq:cg-base} providing the exact minimizer of $f$ in fewer than $d$ iterations ($d$ is the manifold dimension).

For $\dict\in\manifold^d_n$ and referring back to~\eqref{eq:dict_subproblem_1}, the recurrence in~\eqref{eq:cg-base} will use the Riemannian retraction~\cite{absil2009optimization}[Chap.4] and the gradient $\grad \Theta(\dict_k)$ will assume the Riemannian gradient (here we use $\dict_k$ to represent the dictionary tensor at the $k$-th iteration). This leads to an important issue: the gradients $\grad \Theta(\dict_k)$  and $\grad \Theta(\dict_{k-1})$ belong to two different tangent spaces $\tangent{\dict_k}\manifold$ and $\tangent{\dict_{k-1}}\manifold$ respectively, and thus cannot be combined as in~\eqref{eq:next-step}. Thus, following~\cite{absil2009optimization}[Chapter 8] we resort to vector transport -- a scheme to transport a tangent vector at $P\in\manifold$ to a point $\Exp_P(S)$ where $S\in\tangsp$ and $\Exp$ is the exponential map. The resulting formula for the direction update becomes
\begin{equation}
\xi_{\dict_{k}} = -\grad \Theta(\dict_{k}) + \mu_{k}\vectrans{\gamma_k\xi_{k-1}}{\xi_{k-1}},
\end{equation}
where
{\small
\begin{equation}
\mu_{k} = \frac{\left\langle\grad \Theta(\dict_{k}), \grad \Theta(\dict_{k}) - \vectrans{\gamma_k\xi^{k-1}}{\grad \Theta(\dict_{k-1})} \right\rangle}{\left\langle \grad \Theta(\dict_{k-1}),  \grad \Theta(\dict_{k-1})\right\rangle}.
\label{eq:riem-mu-update}
\end{equation}
}
Here for $Z_1, Z_2\in\tangsp$, the map $\vectrans{Z_1}{Z_2}$ defines the vector transport given by:
\begin{equation}
\vectrans{Z_1}{Z_2} = \frac{d}{dt}\exp_P(Z_1 + tZ_2)\bigg|_{t=0}.
\end{equation}


The remaining technical detail is the expression for the Riemannian gradient $\grad \Theta(\dict)$, which we derive next.
\subsubsection{Riemannian Gradient}
The following lemma connects the Riemannian gradient to the Euclidean gradient of $\Theta(\dict)$ in~\eqref{eq:dict_subproblem_1}.
\begin{lemma}
For a dictionary tensor $\dict\in\manifold^d_n$, let $\Theta(\dict)$ be a differentiable function. Then the Riemannian gradient $\grad \Theta(\dict)$ satisfies the following condition:
\begin{equation}
\langle \grad \Theta(\dict), \zeta \rangle_{\dict} = \langle \nabla\Theta(\dict), \zeta\rangle_{\eye}, \forall \zeta\in\tangsp^d_n,
\label{eq:grad_euc_grad}
\end{equation}
where $\nabla \Theta(\dict)$ is the Euclidean gradient of $\Theta(\dict)$. The Riemannian gradient for the $i$-th dictionary atom is given by $\grad_i\Theta(\dict) = \atom_i\nabla_{\atom_i}\Theta(\dict) \atom_i$.
\end{lemma}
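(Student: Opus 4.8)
The plan is to prove the lemma directly from the defining property of the Riemannian gradient, exploiting that the product metric splits across the dictionary slices. Recall that on a Riemannian manifold the gradient $\grad\Theta(\dict)$ is the unique tangent vector representing the differential $D\Theta(\dict)$ under the Riemannian metric, i.e. $\ip{\grad\Theta(\dict)}{\zeta}_{\dict} = D\Theta(\dict)[\zeta]$ for every tangent $\zeta$. Since $\Theta$ is differentiable, its directional derivative along $\zeta$ can equally be written through the ambient (trace) inner product as $D\Theta(\dict)[\zeta] = \ip{\nabla\Theta(\dict)}{\zeta}_{\eye}$, where $\nabla\Theta(\dict)$ is the Euclidean gradient. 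Equating the two expressions yields the first displayed identity~\eqref{eq:grad_euc_grad} immediately; this part is essentially a restatement of the definition.

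To extract the explicit slice-wise formula, I would use that $\manifold^d_n$ is a Cartesian product, so both its tangent space and its Riemannian metric decompose as direct sums over the $n$ atoms: a tangent vector is $\zeta = (\zeta_1,\dots,\zeta_n)$ with each $\zeta_i\in\sym{d}$, and $\ip{\cdot}{\cdot}_{\dict} = \sum_i \ip{\cdot}{\cdot}_{\atom_i}$. Choosing $\zeta$ supported on a single slice reduces~\eqref{eq:grad_euc_grad} to the per-atom condition $\ip{\grad_i\Theta(\dict)}{\zeta_i}_{\atom_i} = \ip{\nabla_{\atom_i}\Theta(\dict)}{\zeta_i}_{\eye}$ for all $\zeta_i\in\sym{d}$.

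Next I would substitute the affine invariant metric~\eqref{eq:riem_metric}, $\ip{Z_1}{Z_2}_{P} = \trace(P^{-1}Z_1 P^{-1}Z_2)$, into the left-hand side with $P=\atom_i$, turning the per-atom condition into $\trace(\atom_i^{-1}\grad_i\Theta(\dict)\,\atom_i^{-1}\zeta_i) = \trace(\nabla_{\atom_i}\Theta(\dict)\,\zeta_i)$. The natural guess $\grad_i\Theta(\dict)=\atom_i\nabla_{\atom_i}\Theta(\dict)\atom_i$ makes the left side collapse, via the cyclic property of the trace, to $\trace(\nabla_{\atom_i}\Theta(\dict)\,\zeta_i)$, matching the right side for every $\zeta_i$; since the pairing $\ip{\cdot}{\cdot}_{\eye}$ is nondegenerate on $\sym{d}$ this candidate is the unique Riemannian gradient, proving the claimed formula.

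The one point requiring care, and the likely main obstacle, is the symmetry bookkeeping: the tangent space consists of symmetric matrices, so the identity is only tested against symmetric $\zeta_i$, and the Euclidean gradient must be understood as its symmetric part. I would verify that $\atom_i\nabla_{\atom_i}\Theta(\dict)\atom_i$ is genuinely symmetric whenever $\nabla_{\atom_i}\Theta(\dict)$ is, which holds because congruence by the symmetric factor $\atom_i$ preserves symmetry, and that the resulting vector lies in $\sym{d}$ so it is admissible as a gradient. For the concrete $\Theta$ of~\eqref{eq:dict_subproblem_1} this symmetry is automatic, since differentiating the squared Riemannian distance term produces a symmetric matrix.
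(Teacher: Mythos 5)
Your proposal is correct and follows essentially the same route as the paper: the paper's proof simply cites the defining property of the Riemannian gradient (Absil et al., Chap.~5) for identity~\eqref{eq:grad_euc_grad} and then obtains the atom-wise formula by substituting the metric~\eqref{eq:riem_metric} into the left-hand side, which is exactly the cyclic-trace computation you carry out. Your additional care with the product-manifold decomposition and the symmetry of $\atom_i\nabla_{\atom_i}\Theta(\dict)\atom_i$ just makes explicit what the paper leaves implicit.
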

\begin{proof}
See \cite{absil2009optimization}[Chap. 5]. The latter expression is obtained by substituting the inner product on the LHS of~\eqref{eq:grad_euc_grad} by its definition in~\eqref{eq:riem_metric}.
\end{proof}

We can derive the Euclidean gradient $\nabla \Theta(\dict)$ as follows: let $\dataZ_j=\datapt_j^{-\half}$ and $M_j(\dict)=\dict\alpha_j=\sum_{i=1}^n\alpha_j^i\atom_i$. Then,
\begin{equation}
	\Theta(\dict) = \half\sum_{j=1}^N \trace(\Log(\dataZ_j M_j(\dict)\dataZ_j)^2)+ \lambda_{\dict}\sum_{i=1}^n\trace(\atom_i).					
	\label{eq:dict_subproblem_2}
\end{equation}
The derivative $\nabla_{\atom_i}\Theta(\dict)$ of~\eqref{eq:dict_subproblem_2} w.r.t. to atom $\atom_i$ is:
\begin{align}
	\sum_{j=1}^N\alpha^i_j\bigl(\dataZ_j \Log(M_j(\dict)) \bigl(M_j(\dict)\bigr)^{-1} \dataZ_j\bigr) + \lambda_{\dict}I.
\label{eq:euc_gradient_dict_problem}
\end{align}

%
%

\subsection{Sparse Coding Subproblem}
\label{sec:sparse_coding_subproblem}
Referring back to~\eqref{eq:problem_p1}, let us now consider the sparse coding subproblem. Suppose we have a dictionary tensor $\dict$ available. For a data matrix $\datapt_j\in\spd{d}$ our sparse coding objective is to solve
\begin{equation}
  \label{eq:sc_subproblem_1}
  \begin{split}
	&\min_{\alpha_j \ge 0}\quad\phi(\alpha_j) := \half\riem^2\left(\datapt_i,\dict\alpha_j\right) + \sparse(\alpha_j)\\
    &=\half\fnorm{\Log{\nlsum_{i=1}^n \alpha^i_j \datapt^{-\half}\atom_j \datapt^{-\half}}}^2 + \sparse(\alpha_j),
  \end{split}
\end{equation}
where $\alpha^i_j$ is the $i$-th dimension of $\alpha_j$ and $\sparse$ is a sparsity inducing function. For simplicity, we use the sparsity penalty $\sparse(\alpha) = \lambda\|\alpha\|_1$, where $\lambda > 0$ is a regularization parameter. Since we are working with $\alpha \ge 0$, we replace this penalty by $\lambda\sum_i \alpha_i$, which is differentiable.
\medskip

The subproblem~\eqref{eq:sc_subproblem_1} measures reconstruction quality offered by a sparse non-negative linear combination of the atoms to a given input point $X$. It will turn out (see experiments in Section~\ref{sec:expts}) that the reconstructions obtained via this model actually lead to significant improvements in performance over sparse coding models that ignore the rich geometry of SPD matrices. But this gain comes at a price: model~\eqref{eq:sc_subproblem_1} is a nontrivial to optimize; it remains difficult even if we take into account geodesic convexity of $\riem$.

While in practice this nonconvexity does not seem to hurt our model, we show below a surprising but intuitive constraint under which Problem~\eqref{eq:sc_subproblem_1} actually becomes convex. The following lemma will be useful later.


\begin{lemma}
  \label{lem:grad}
  Let $B$, $C$, and $X$ be fixed SPD matrices. Consider the function $f(x)=\riem^2(xB+C,X)$. The derivative $f'(x)$ is given by    
  \begin{equation}
    \label{eq:4}
    f'(x)=2\trace(\log(S(xB+C)S)S^{-1}(xB+C)^{-1}BS),    
  \end{equation}  
  where $S\equiv X^{-\half}$.
\end{lemma}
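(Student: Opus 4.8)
The plan is to reduce everything to a single scalar-parameter derivative of a trace of a matrix function, and then invoke the standard trace-derivative identity. First I would use the definition of the Riemannian distance in~\eqref{eq:riem} to rewrite $f$ purely as a trace. With $S = X^{-\half}$ and $M(x) := S(xB+C)S$, the squared distance becomes
\[
\riem^2(xB+C,X) = \fnorm{\Log(M(x))}^2 = \trace\bigl(\Log(M(x))^2\bigr).
\]
Since $B$, $C$, and $X$ are SPD (and for the relevant range $x\ge 0$), $M(x)$ is SPD, hence diagonalizable with positive spectrum; via the spectral calculus $\Log(M)^2 = \psi(M)$ for the scalar function $\psi(t) := (\log t)^2$, so $f(x) = \trace(\psi(M(x)))$. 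The derivative $M'(x)$ is then immediate, because $M$ is affine in $x$: writing $M(x) = x\,SBS + SCS$ gives $M'(x) = SBS$, a constant symmetric matrix.

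The core step is the trace-derivative identity $\frac{d}{dx}\trace(\psi(M(x))) = \trace(\psi'(M(x))\,M'(x))$. I would justify this by the power-series route: for a monomial, cyclicity of the trace gives $\frac{d}{dx}\trace(M^k) = \sum_{j=0}^{k-1}\trace(M^j M' M^{k-1-j}) = k\,\trace(M^{k-1}M')$, so summing a convergent power series term by term yields $\trace(\psi'(M)M')$. Because $M(x)$ is SPD with spectrum in a compact subset of $(0,\infty)$ and $\psi$ is real-analytic there, this term-by-term differentiation is legitimate (alternatively one may invoke the Daleckii--Krein formula, whose off-diagonal divided-difference terms cancel under the trace). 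This identity is exactly where the potential difficulty lies: the Fr\'echet derivative of the matrix logarithm is genuinely noncommutative and messy, and it is only the trace that collapses it to the clean form $\trace(\psi'(M)M')$.

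Finally I would assemble the pieces. Since $\psi'(t) = 2(\log t)/t$, spectral calculus gives $\psi'(M) = 2\Log(M)M^{-1}$ (the two factors commute, being functions of $M$). Substituting $M = S(xB+C)S$, hence $M^{-1} = S^{-1}(xB+C)^{-1}S^{-1}$, together with $M'(x) = SBS$, yields
\[
f'(x) = \trace\bigl(\psi'(M)M'\bigr) = 2\,\trace\bigl(\Log(M)\,S^{-1}(xB+C)^{-1}S^{-1}\,SBS\bigr).
\]
Using $S^{-1}S = \eye$ in the middle collapses the expression to $2\,\trace(\Log(S(xB+C)S)\,S^{-1}(xB+C)^{-1}BS)$, which is precisely~\eqref{eq:4}. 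The only bookkeeping to watch is keeping the $S$ and $S^{-1}$ factors in the correct order, which is harmless under the trace but should be settled before applying cyclicity so as to match the stated form.
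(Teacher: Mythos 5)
Your proposal is correct and takes essentially the same route as the paper: both rewrite $f(x)=\trace\bigl(\log(SM(x)S)^2\bigr)$ and differentiate under the trace via the identity $\tfrac{d}{dx}\trace(\psi(M(x)))=\trace(\psi'(M(x))M'(x))$ with $\psi(t)=(\log t)^2$, then simplify $(SMS)^{-1}$ to reach~\eqref{eq:4}. The only difference is that you explicitly justify the trace-collapse step (power series with cyclicity, or Daleckii--Krein), whereas the paper simply invokes ``the chain-rule of calculus''; your version is the more careful one, since the Fr\'echet derivative of $\log$ is noncommutative and the clean formula holds only inside the trace.
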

\begin{proof}
  Introduce the shorthand $M(x)\equiv xB+C$, from definition~\eqref{eq:riem} and using $\fnorm{Z}^2 = \trace(Z^TZ)$ we have
  \begin{equation*}
    f(x) = \trace( \log(SM(x)S)^T\log(SM(x)S)).
  \end{equation*}
  The chain-rule of calculus then immediately yields
  \begin{equation*}
    f'(x) = 2\trace(\log(SM(x)S)(SM(x)S)^{-1}SM'(x)S),
  \end{equation*}
  which is nothing but~\eqref{eq:4}.
\end{proof}

As a brief digression, let us mention below an interesting property of the sparse-coding problem. We do not exploit this property in our experiments, but highlight it here for its theoretical appeal.
\begin{theorem}
  \label{thm:opt}
  The function $\phi(\alpha) := \riem^2(\sum_i\alpha_iB_i,X)$ is convex on the set
  \begin{equation}
    \label{eq:10}
    \mathcal{A} := \lbrace \alpha \mid \nlsum_i \alpha_iB_i \preceq X,\ \text{and } \alpha \ge 0\rbrace.
  \end{equation}
\end{theorem}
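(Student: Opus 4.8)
The plan is to exploit two reductions that turn this matrix statement into an elementary scalar convexity fact. First, observe that $\alpha\mapsto Y(\alpha):=\sum_i\alpha_iB_i$ is an affine map, and that $\mathcal{A}$ is convex: the constraint $\sum_i\alpha_iB_i\preceq X$ is the preimage of the PSD cone under an affine map, and $\alpha\ge 0$ is a cone. Hence along any segment in $\mathcal{A}$ the image $Y(\alpha)$ traces a Euclidean segment lying entirely in the convex set $\{Y:0\prec Y\preceq X\}$. Since $\phi=g\circ Y$ with $Y$ affine, it therefore suffices to prove that $g(Y):=\riem^2(Y,X)$ is convex, in the ordinary Euclidean sense on symmetric matrices, on $\{Y:0\prec Y\preceq X\}$.

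Next I would use affine invariance of $\riem$ (stated after~\eqref{eq:riem}) via the congruence $A=X^{\nhalf}$: setting $W=X^{\nhalf}YX^{\nhalf}$ gives $\riem^2(Y,X)=\riem^2(W,\eye)=\fnorm{\Log(W)}^2=\trace\bigl((\Log W)^2\bigr)$. The change of variables $Y\mapsto W$ is a linear bijection of symmetric matrices that carries $\{0\prec Y\preceq X\}$ onto $\{0\prec W\preceq \eye\}$, because congruence preserves the Loewner order; and linear bijections preserve convexity. So the claim reduces to showing that $h(W):=\trace\bigl((\Log W)^2\bigr)$ is convex on the set of SPD matrices whose spectrum lies in $(0,1]$.

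For this I would invoke the principle that, for a scalar convex $g$ on an interval $J$, the map $W\mapsto\trace(g(W))$ is convex on Hermitian matrices with spectrum in $J$. A self-contained argument: for any orthonormal basis $\{e_k\}$, Jensen's inequality gives $\sum_k g(\langle e_k,We_k\rangle)\le\trace(g(W))$ with equality at the eigenbasis, so $\trace(g(W))=\max_{\{e_k\}}\sum_k g(\langle e_k,We_k\rangle)$ is a pointwise supremum of functions each convex in $W$ (a convex $g$ composed with the linear map $W\mapsto\langle e_k,We_k\rangle$), hence convex; the interval structure of $J$ keeps every relevant convex combination inside the domain of $g$. Equivalently, one may compute the Hessian directly via the Daleckii--Krein formula, obtaining $D^2h(W)[H,H]=\sum_{i,j}\tilde H_{ij}^2\,\frac{g'(\lambda_i)-g'(\lambda_j)}{\lambda_i-\lambda_j}$ with $\tilde H=U^\top HU$, which is nonnegative precisely when $g'$ is monotone, i.e. when $g$ is convex.

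It then remains to verify the scalar fact for $g(w)=(\log w)^2$: a direct computation gives $g''(w)=2(1-\log w)/w^2$, nonnegative for all $w\le e$ and in particular on $(0,1]$. This is where the constraint $\sum_i\alpha_iB_i\preceq X$ does its essential work: it is exactly what forces the spectrum of $W$ into $(0,1]$, the region where $(\log w)^2$ is convex; for eigenvalues exceeding $e$ one has $g''<0$ and the statement would fail. I expect the main obstacle to be this spectral-confinement bookkeeping -- checking that the constraint translates cleanly under the congruence into $W\preceq\eye$ -- together with a careful statement of the trace-convexity principle; the scalar derivative computation itself is routine.
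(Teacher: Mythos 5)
Your proof is correct, and it takes a genuinely different route from the paper's. The paper works directly in the $\alpha$-coordinates: it differentiates the gradient formula of Lemma~\ref{lem:grad} using Fr\'echet-derivative calculus (the integral representation of $D\log$ in Corollary~\ref{cor:log} and the derivative of the inverse map in Lemma~\ref{lem:inv}), and shows that each entry $\partial^2\phi/\partial\alpha_p\partial\alpha_q$ is a sum of two weighted inner products, so that the Hessian is a Gram matrix; the constraint $M \preceq X$ enters through operator monotonicity of $\log$, which makes the second weight $-S\log(SMS)S^{-1}$ positive semidefinite. You instead strip away the coordinates first: composition with the affine map $\alpha \mapsto \sum_i\alpha_i B_i$ and the congruence $Y \mapsto X^{-1/2} Y X^{-1/2}$ (which preserves both convexity and the L\"owner order) reduces everything to convexity of the trace function $W\mapsto\trace\bigl((\Log W)^2\bigr)$ on $\set{W : 0\prec W\preceq \eye}$, which follows from the standard fact that $\trace(g(W))$ is convex whenever the scalar $g$ is convex on an interval containing the spectrum---your Jensen/sup-over-bases argument is a correct self-contained proof of that fact---together with the elementary computation that $(\log w)^2$ has second derivative $2(1-\log w)/w^2 \ge 0$ on $(0,e]$. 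What each buys: the paper's computation doubles as the derivation of the explicit derivative expressions it reuses in its sparse-coding solver, and it yields the closing remark on strict definiteness when the atoms are distinct; your argument is shorter, avoids matrix differential calculus, makes transparent exactly what the constraint does (confine the spectrum of $SMS$ to the region where the scalar function is convex), and in fact proves slightly more, namely convexity on the larger set $\set{\alpha\ge 0 : \nlsum_i\alpha_i B_i \preceq e\,X}$, which the paper's argument does not immediately give. One small caveat shared by both proofs: at $\alpha=0$ the combination $\sum_i\alpha_iB_i$ is singular and $\phi$ is undefined, so strictly speaking the claim concerns $\mathcal{A}\setminus\set{0}$ (still a convex set); your step asserting that the image of $\mathcal{A}$ lies in $\set{Y : 0 \prec Y \preceq X}$ silently uses $\alpha\ne 0$, just as the paper silently assumes $M$ invertible.
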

\begin{proof}
See Appendix~\ref{app:proof_of_theorem_opt}.
\end{proof}

\relax

Let us intuitively describe what Theorem~\ref{thm:opt} is saying. While sparsely encoding data we are trying to find sparse coefficients $\alpha_1,\ldots,\alpha_n$, such that in the ideal case we have $\sum_i \alpha_i B_i = X$. But in general this equality cannot be satisfied, and one only has $\sum_i\alpha_iB_i\approx X$, and the quality of this approximation is measured using $\phi(\alpha)$ or some other desirable loss-function. The loss $\phi(\alpha)$ from~\eqref{eq:sc_subproblem_1} is nonconvex while convexity is a ``unilateral'' property---it lives in the world of inequalities rather than equalities~\cite{urruty}. And it is known that SPD matrices in addition to forming a manifold also enjoy a rich conic geometry that is endowed with the L\"owner partial order. Thus, instead of seeking arbitrary approximations $\sum_i\alpha_iB_i \approx X$, if we limit our attention to those that underestimate $X$ as in~\eqref{eq:10}, we might benefit from the conic partial order. It is this intuition that Theorem~\ref{thm:opt} makes precise.


\subsubsection{Optimizing Sparse Codes}
Writing $M(\alpha_p) = \alpha_pB_p+\nlsum_{i\neq p}\alpha_iB_i$ and using Lemma~\ref{lem:grad} we obtain 
\begin{equation}
  \label{eq:3}
  \frac{\partial \phi(\alpha)}{\partial \alpha_p} = \trace\left(\log\bigl(SM(\alpha_p)S\bigr)\bigl(SM(\alpha_p)S\bigr)^{-1}SB_pS\right) + \lambda.
\end{equation}
Computing~\eqref{eq:3} for all $\alpha$ is the dominant cost in a gradient-based method for solving~\eqref{eq:problem_p1}. We present pseudocode (Alg.~\ref{alg:grad}) that efficiently implements the gradient for the first part of~\eqref{eq:3}. The total cost of Alg.~\ref{alg:grad} is $O(nd^2)+O(d^3)$---a na\"ive implementation of~\eqref{eq:3} costs $O(nd^3)$, which is substantially more expensive.

\begin{algorithm}  
  \SetAlgoLined
  \KwIn{$B_1,\ldots,B_n, X \in \spd{d}$, $\alpha \ge 0$}
  $S \gets X^{-1/2}$; $M \gets \nlsum_{i=1}^n\alpha_i B_i$\;
  $T \gets S\log(SMS)(MS)^{-1}$\;
  \For{$i=1$ \KwTo $n$}{
    $g_i \gets \trace(TB_p)$\;
  }
  \KwRet{$g$}
  \caption{Efficient computation of gradients}
  \label{alg:grad}
\end{algorithm}

 Alg.~\ref{alg:grad} in conjunction with a gradient-projection scheme essentially  runs the iteration
\begin{equation}
  \label{eq:5}
  \alpha^{k+1} \gets \proj{\alpha^k - \eta_k\nabla\phi(\alpha^k)},\qquad k=0,1,\ldots,
\end{equation}
where $\proj{\cdot}$ denotes the projection operator defined as
\begin{equation}
  \label{eq:11}
  \proj{\alpha} \equiv \alpha \mapsto \argmin\nolimits_{\alpha'}\tfrac{1}{2}\|\alpha'-\alpha\|_2^2,\quad\text{s.t.}\ \alpha' \in \mathcal{A}.
\end{equation}
Iteration~\eqref{eq:5} has three major computational costs: (i) the stepsize $\eta_k$; (ii)  the gradient $\nabla\phi(\alpha^k)$; and (iii) the projection~\eqref{eq:11}. Alg.~\ref{alg:grad} shows how to efficiently obtain the gradient. The projection task~\eqref{eq:11} is a special least-squares (dual) semidefinite program (SDP), which can be solved using any SDP solver or by designing a specialized routine. To avoid the heavy computational burden imposed by an SDP, we drop the constraint $\alpha\in \mathcal{A}$; this sacrifices convexity but makes the computation vastly easier, since with this change, we simply have $\proj{\alpha} = \max(0,\alpha)$.

In~\eqref{eq:5}, it only remains to specify how to obtain the stepsize $\eta_k$. There are several choices available in the nonlinear programming literature~\cite{bertsekas99} for choosing $\eta_k$, but most of them can be quite expensive. In our quest for an efficient sparse coding algorithm, we choose to avoid expensive line-search algorithms for selecting $\eta_k$ and prefer to use the Barzilai-Borwein stepsizes~\cite{bb88}, which can be computed in closed form and lead to remarkable gains in performance~\cite{bb88,schmidt09}. In particular, we use the Spectral Projected Gradient (SPG) method~\cite{birgin01} by adapting a simplified implementation of~\cite{schmidt09}.

SPG runs iteration~\eqref{eq:5} using Barzilai-Borwein stepsizes with an occasional call to a nonmontone line-search strategy to ensure convergence of $\{\alpha^k\}$. Without the constraint $\alpha' \in \mathcal{A}$, we cannot guarantee anything more than a stationary point of~\eqref{eq:problem_p1}, while if we were to use the additional constraint then we can even obtain global optimality for iterates generated by~\eqref{eq:5}.


\section{Experiments}
\label{sec:expts}
In this section, we provide experiments on simulated and real-world data demonstrating the effectiveness of our algorithm compared to the state-of-the-art DLSC methods on SPD valued data. First, we demonstrate results on simulated data analyzing the performance of our framework for various settings. This will precede experiments on standard benchmark datasets.

\subsection{Comparison Methods}
In the experiments to follow, we will denote dictionary learning and sparse coding algorithms by DL and SC respectively. 
We will compare our Riemannian (Riem) formulation against combinations of several state-of-the-art DLSC methods on SPD matrices, namely using (i) log-Euclidean (LE) metric for DLSC~\cite{guo2010action}, (ii) Frobenius norm (Frob) which discards the manifold structure, (iii) kernel
methods such as the Stein-Kernel [18] proposed in [12] and the log-Euclidean kernel [13]. 


\subsection{Simulated Experiments} 
In this subsection, we evaluate in a controlled setting, some of the properties of our Riemannian sparse coding scheme. For all our simulations, we used covariances generated from data vectors sampled from a zero-mean unit covariance normal distribution. For each covariance sample, the number of data vectors is chosen to be ten times its dimensionality. For fairness of the comparisons, we adjusted the regularization parameters of the sparse coding algorithms so that the codes generated are approximately 10\% sparse. The plots to follow show the performance averaged over 50 trials. \textcolor{blue}{Further, all the algorithms in this experiment used the SPG method to solve their respective formulations so that their performances are comparable. The intention of these timing comparisons is to empirically point out the relative computational complexity of our Riemannian scheme against the baselines rather than to show exact computational times. For example, for the comparisons against the method Frob-SC, one can vectorize the matrices and then use a vectorial sparse coding scheme. In that case, Frob-SC will be substantially faster, and incomparable to our scheme as it solves a different problem.} In these experiments, we will be using the classification accuracy as the performance metric. Our implementations are in MATLAB and the timing comparisons used a single core Intel 3.6GHz CPU.

\subsubsection{Increasing Data Dimensionality} 
While DT-MRI applications typically use small SPD matrices ($3\times 3$), the dimensionality is very diverse for other applications in computer vision. For example, Gabor covariances for face recognition uses about $40$-dimensional SPD matrices~\cite{pang2008gabor}, while even larger covariance descriptors are becoming common~\cite{harandi2014manifold}. The goal of this experiment is to analyze the scalability of our sparse coding setup against an increasing size of the data matrices. To this end, we fixed the number of dictionary atoms to be 200, while increased the matrix dimensionality from 3 to 100. Figure~\ref{fig:time_against_dim} plots the time-taken by our method against the na\"{i}ve Frob-SC method (although it uses the SPG method for solution). The plot shows that the extra computations required by our Riem-SC is not substantial compared to Frob-SC. 


\subsubsection{Increasing Dictionary Size} 
In this experiment, we compare the scalability of our method to work with larger dictionary tensors. To this end, we fixed the data dimensionality to 10, while increased the number of dictionary atoms from 20 to 1000. Figure~\ref{fig:time_against_atoms} plots the time-taken against the dictionary size. As is expected, the sparse coding performance for all the kernelized schemes drops significantly for larger dictionary sizes, while our scheme performs fairly. 

\subsubsection{Increasing Sparsity Regularization} 
In this experiment, we decided to evaluate the effect of the sparsity promoting regularization $\lambda$ in~\eqref{eq:sc_subproblem_1}. To this end, we generated a dictionary of 100 atoms from covariances of Gaussian random variables. Later, 1000 SPD matrices are produced using conic combinations of randomly selected atoms. We used an active size of 10 dictionary atoms for all the SPD matrices. After adding random SPD noise to each matrix, we used half of them for learning the dictionary, while the other half was used for evaluating the sparsity regularization. We increased $\lambda$ from $10^{-5}$ to $10^{5}$ at steps of 10. In Figure~\ref{fig:sparsity_against_lambda}, we plot the sparsity (i.e., number of non-zero coefficients/size of coefficients) for varying $\lambda$. We see that while the lower values of $\lambda$ does not have much influence on sparsity, as $\lambda$ increases beyond a certain threshold, sparsity increases. A similar trend is seen for increasing data dimensionality. However, we find that the influence of $\lambda$ starts diminishing as the dimensionality increases. For example, sparsity plateaus after $3\%$ for 5-dimensional data, while this happens at nearly $15\%$ for 20-dimensional data. The plateauing of sparsity is not unexpected and is directly related to the Riemannian metric that we use -- our loss will prevent all the sparse coefficients from going to zero simultaneously as in such a case the objective will tend to infinity. Further, as the matrix dimensionality increases, it is more likely that the data matrices become ill-conditioned. As a result, this plateau-ing happens much earlier than for better conditioned matrices (as in the case of 5-dimensional matrices in Figure~\ref{fig:sparsity_against_lambda}). 

In Figure~\ref{fig:sparsity_against_lambda_compare_LE}, we contrast the sparsity pattern produced by our Riemannian sparse coding (Riem-DL + Riem-SC) scheme against that of the traditional sparse coding objective using log-Euclidean sparse coding (LE-DL + LE-SC), for 20-dimensional SPD data. As is expected, the log-Euclidean DL follows the conventional convergence patterns in which sparsity goes to zero for larger values of the regularization. Since for larger regularizations, most of the coefficients in our Riem-SC have low values, we can easily discard them by thresholding. However, we believe this difference in the sparsity patterns needs to be accounted for when choosing the regularization parameters for promoting sparsity in our setup.

\subsubsection{Convergence for Increasing Dimensionality}
In this experiment, we evaluate the convergence properties of our dictionary learning sub-problem based on the Riemannian conjugate gradient scheme. To this end, we used the same setup as in the last experiment using data generated by a pre-defined dictionary, but of different dimensionality ($\in \set{3, 5,10,20}$). In Figure~\ref{fig:objective_versus_iter}, we plot the dictionary learning objective against the iterations. As is expected, smaller data dimensionality shows faster convergence. That said, even 20-dimensional data was found to converge in less than 50 alternating iterations of the algorithm, which is remarkable.

\begin{figure*}%
\centering
\subfigure[]{\label{fig:time_against_dim}\includegraphics[width=0.4\linewidth]{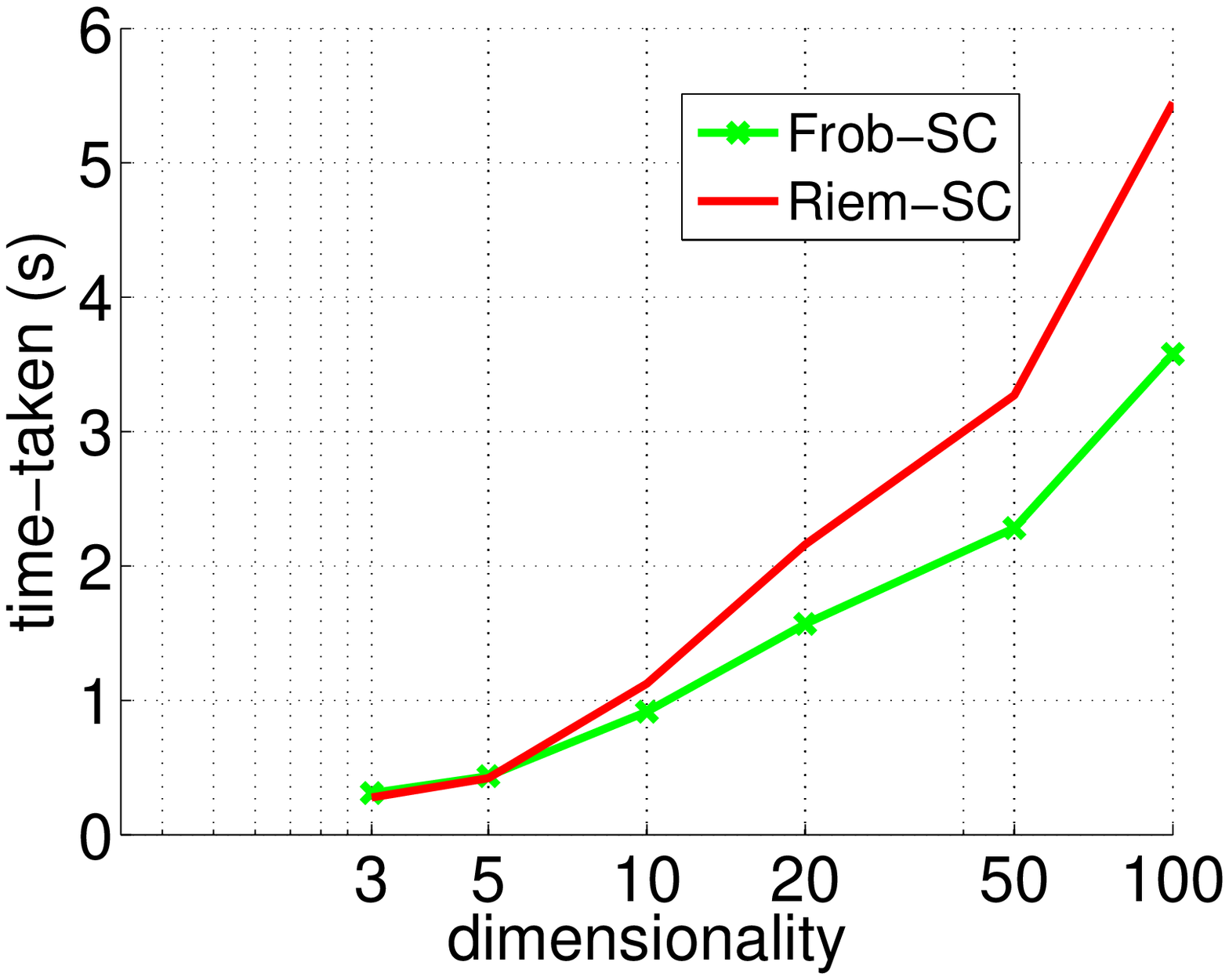}}
\subfigure[]{\label{fig:time_against_atoms}\includegraphics[width=0.4\linewidth]{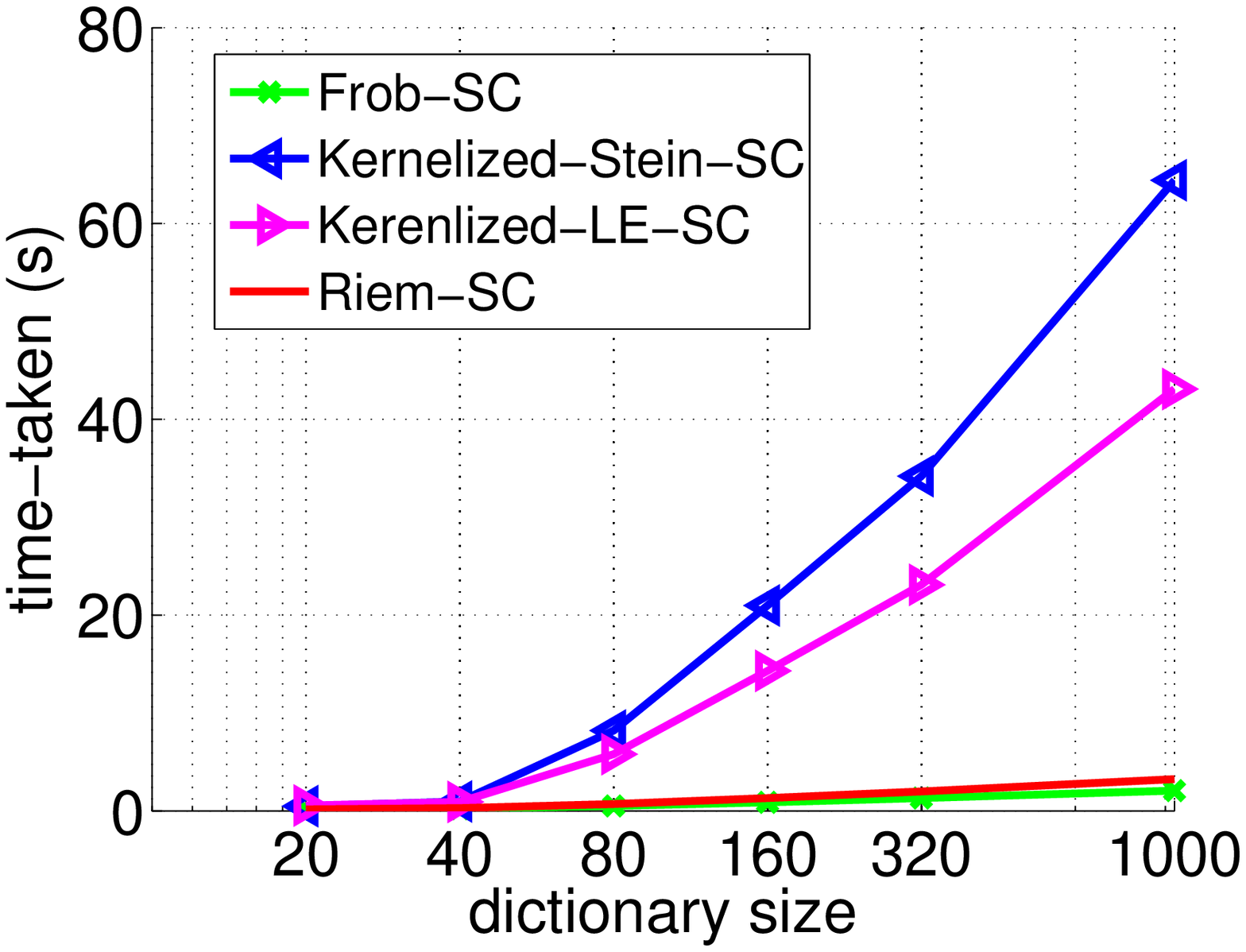}}
 \caption{Sparse coding time against (a) increasing matrix dimensionality and (b) increasing number of dictionary atoms. We used a maximum of 100 iterations for all the algorithms.}%
\label{fig:sim_results}%
\end{figure*}

\begin{figure*}%
\centering
\subfigure[]{\label{fig:sparsity_against_lambda}\includegraphics[width=0.4\linewidth]{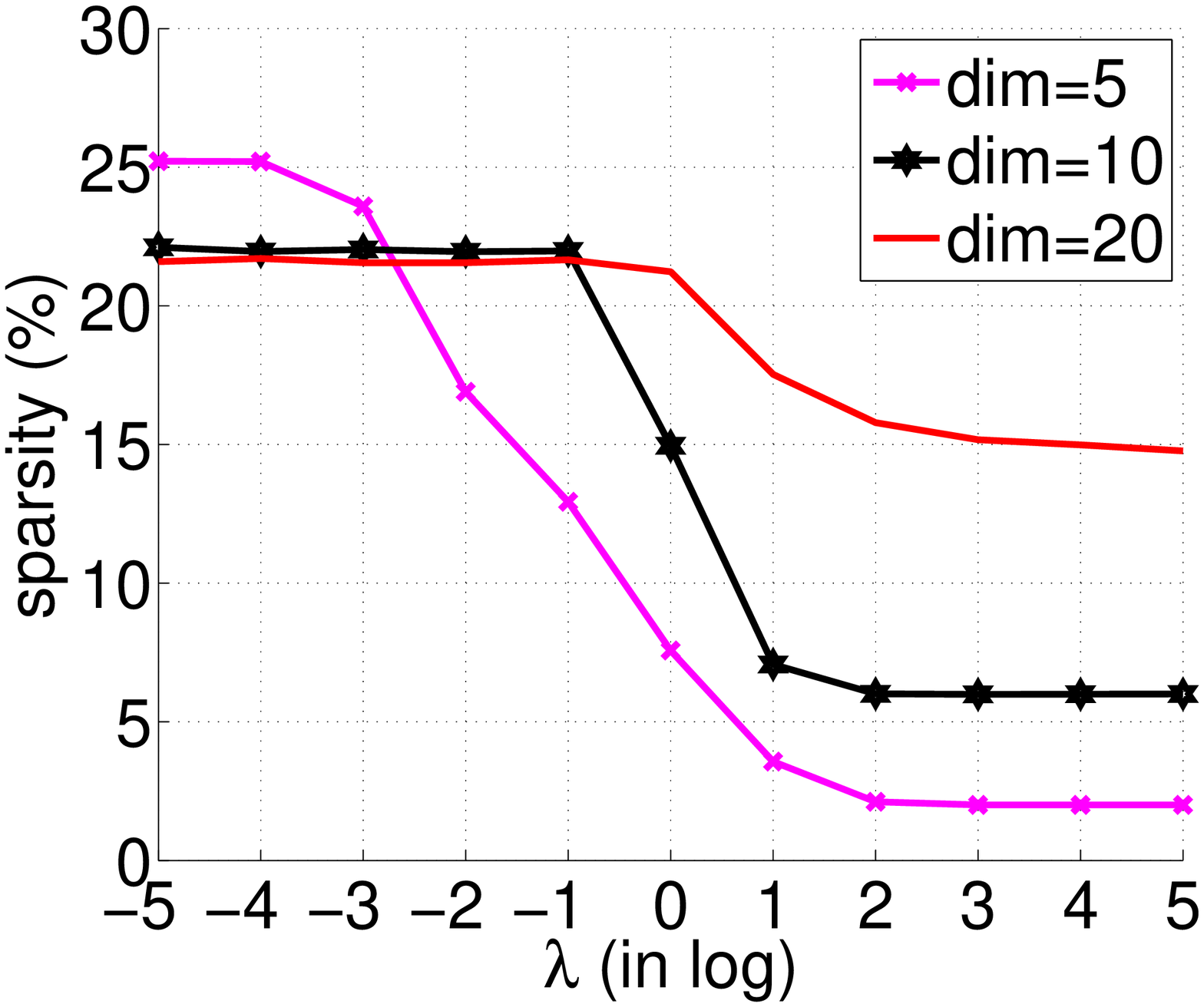}}
\subfigure[]{\label{fig:sparsity_against_lambda_compare_LE}\includegraphics[width=0.4\linewidth]{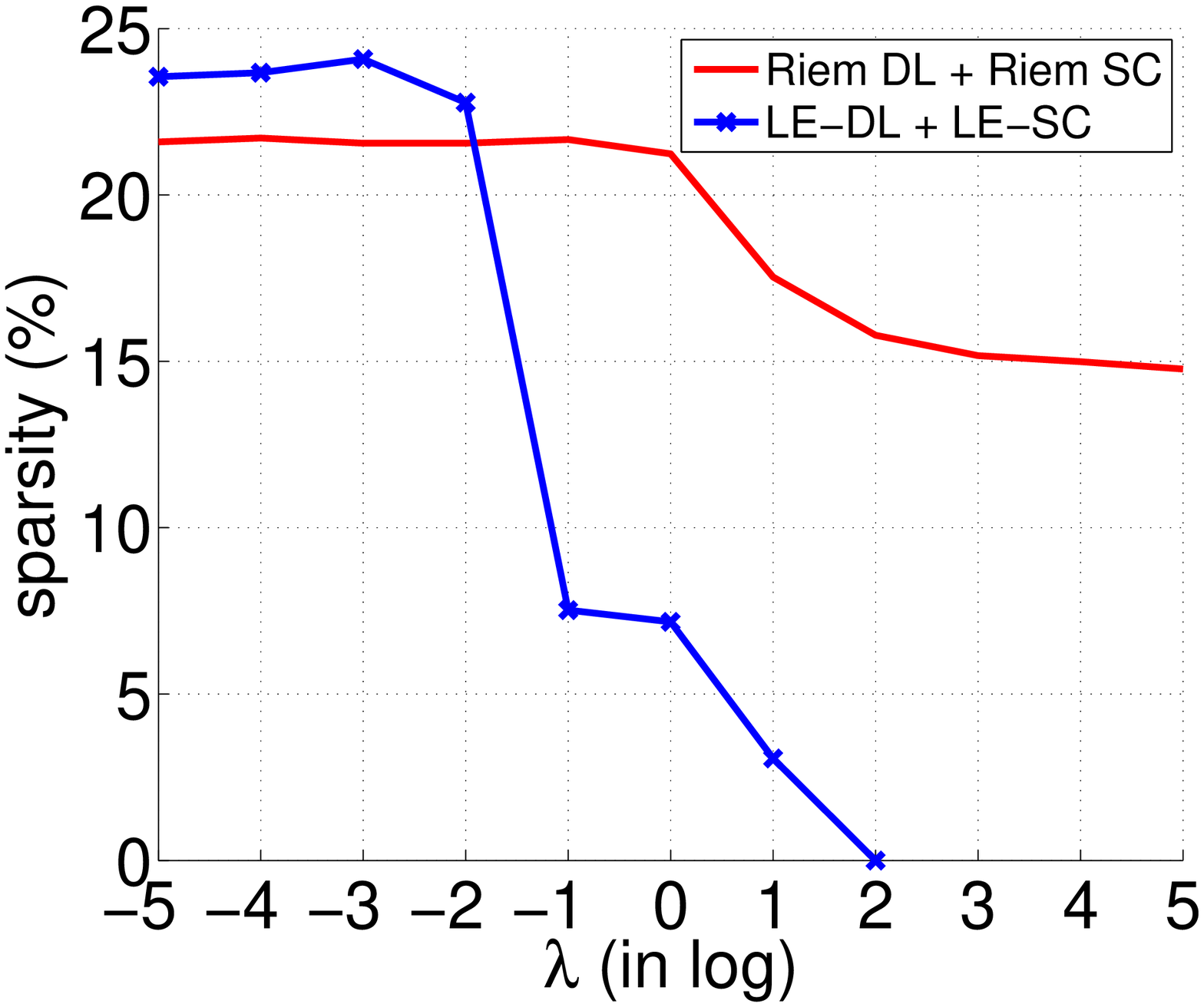}}
\caption{Sparsity of coding against (a) increasing sparsity inducing regularization $\lambda$ for various matrix dimensionality and (b) sparsity against lambda in comparison to that for log-Euclidean DL.}%
\label{fig:sparsity}%
\end{figure*}


\subsection{Experiments with Public Datasets}
Now let us evaluate the performance of our framework on computer vision datasets. We experimented on data available from four standard computer vision applications, namely  (i) 3D object recognition on the RGBD objects dataset~\cite{lai2011large}, (ii) texture recognition on the standard Brodatz dataset~\cite{ojala1996comparative}, (iii) person re-identification on the ETHZ people dataset~\cite{ess2007depth}, and (iv) face recognition on the Youtube faces dataset~\cite{youtubefaces}. We describe these datasets below.

\begin{figure*}[htbp]
	\centering		
		\subfigure[RGBD objects]{\includegraphics[trim={3 0 0 0},clip,height=2cm,width=4cm]{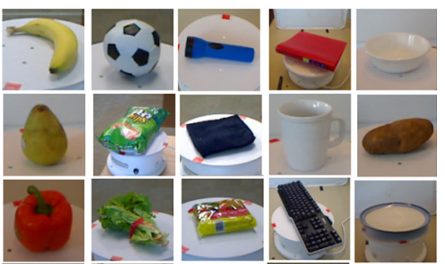}}
	  \subfigure[Brodatz textures]{\includegraphics[height=2cm,width=4cm]{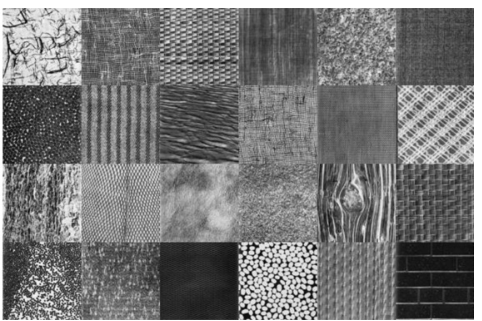}}
	  \subfigure[ETHZ appearances]{\includegraphics[height=2cm,width=4cm]{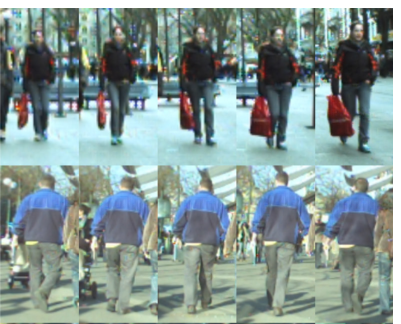}}
	  \subfigure[Youtube faces]{\includegraphics[height=2cm,width=4cm]{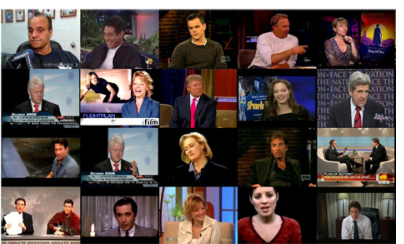}}
		\caption{Montage of sample images from the four datasets used in our experiments. From top, samples from the RGBD object recognition dataset, Brodatz texture recognition, ETHZ people re-identification dataset, and Youtube face recognition dataset.}	  
		\label{fig:allsampleimages}		
\end{figure*}

\subsubsection{Brodatz Texture}
\label{sec:brodatz}
Texture recognition is one of the most successful applications of covariance descriptors~\cite{de2008texture,porikli1}. For this evaluation, we used the Brodatz texture dataset\footnote{\url{http://www.ux.uis.no/~tranden/brodatz.html}}, from which we took 100 gray scale texture images, each of dimension $512\times 512$. We extracted $32\times 32$ patches from a dense grid without overlap thus generating 256 texture patches per image, and totalling 25600 patches in our dataset. To generate covariance descriptors from each patch, we followed the traditional protocol, i.e., we extracted a 5-dimensional feature descriptor from each pixel location in each patch. The features are given by: $F_{textures} = \left[x, y, I, abs(I_x), abs(I_y)\right]^T$, where the first two dimensions are the coordinates of a pixel from the top-left corner of a patch, the last three dimensions are the image intensity, and image gradients in  the $x$ and $y$ directions respectively. Region covariances of size $5\times 5$ are computed from all features in a patch.

\subsubsection{ETHZ Person Re-identification Dataset}
Tracking and identifying people in severely dynamic environments from multiple cameras play an important role in visual surveillance. The visual appearances of people in such applications are often noisy, and low resolution. Further, the appearances undergo drastic variations with  respect to their pose, scene illumination, and occlusions. Lately, covariance descriptors have been found to provide a robust setup for this task \cite{harandi2012sparse,ma2012bicov}. In this experiment, we evaluate the performance of clustering people appearances on the benchmark ETHZ dataset~\cite{schwartz09d}. This dataset consists of low-resolution images of tracked people from a real-world surveillance setup. The images are from 146 different individuals. There are about 5--356 images per person. Sample images from this dataset are shown in Figure~\ref{fig:allsampleimages}. There are a total of 8580 images in this dataset.

\begin{figure}
\centering
\includegraphics[width=6cm]{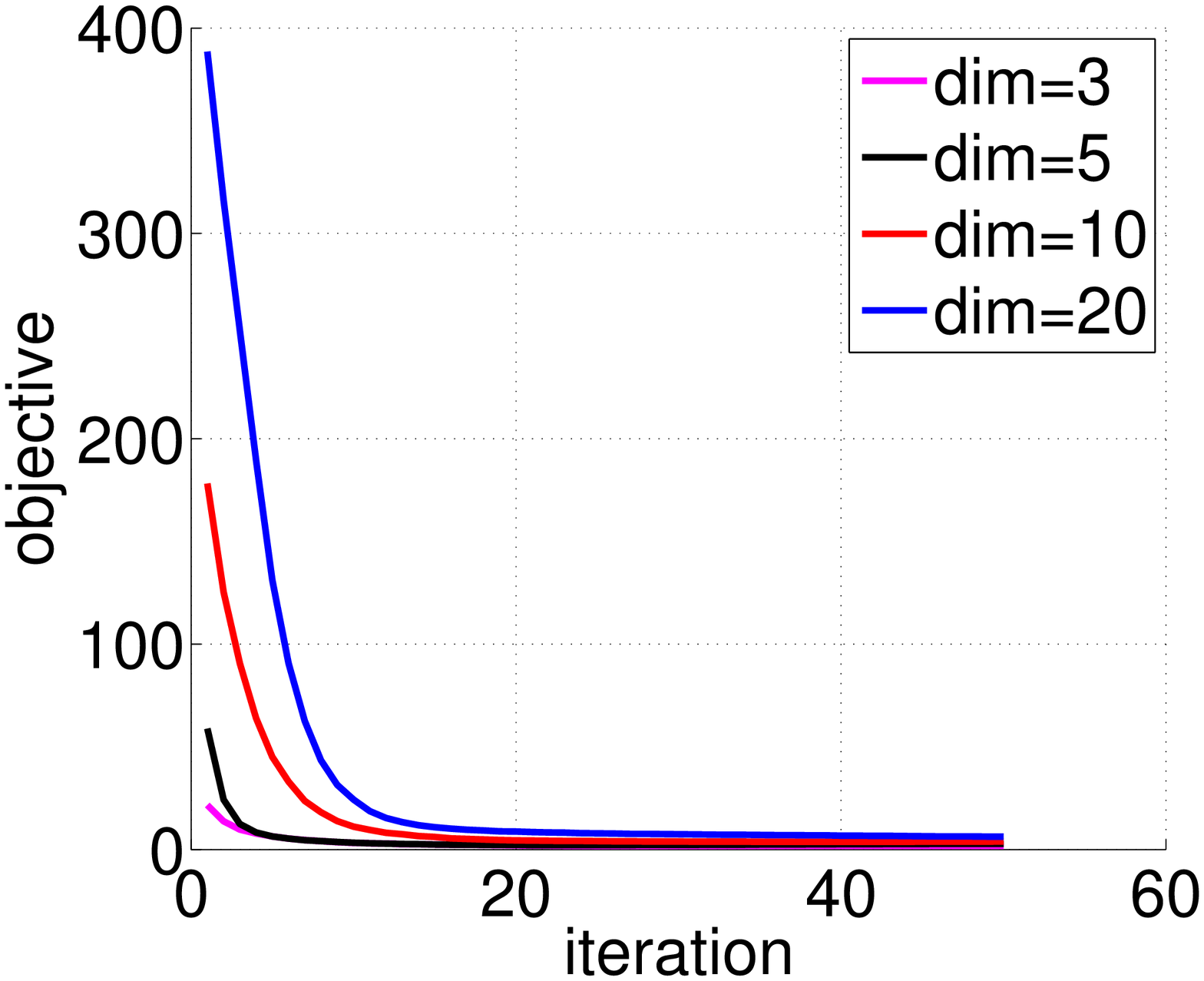}
\caption{Dictionary Learning objective using Riemannian conjugate gradient descent against increasing number of iterations (alternating with the sparse coding sub-problem). We plot the convergence of the objective for various dimensionality of the data matrices.}%
\label{fig:objective_versus_iter}
\end{figure}

\begin{figure*}[htbp]
	\centering		
	\subfigure[Brodatz textures]{\label{fig:Recall_K_clus_textures}\includegraphics[width=0.4\linewidth]{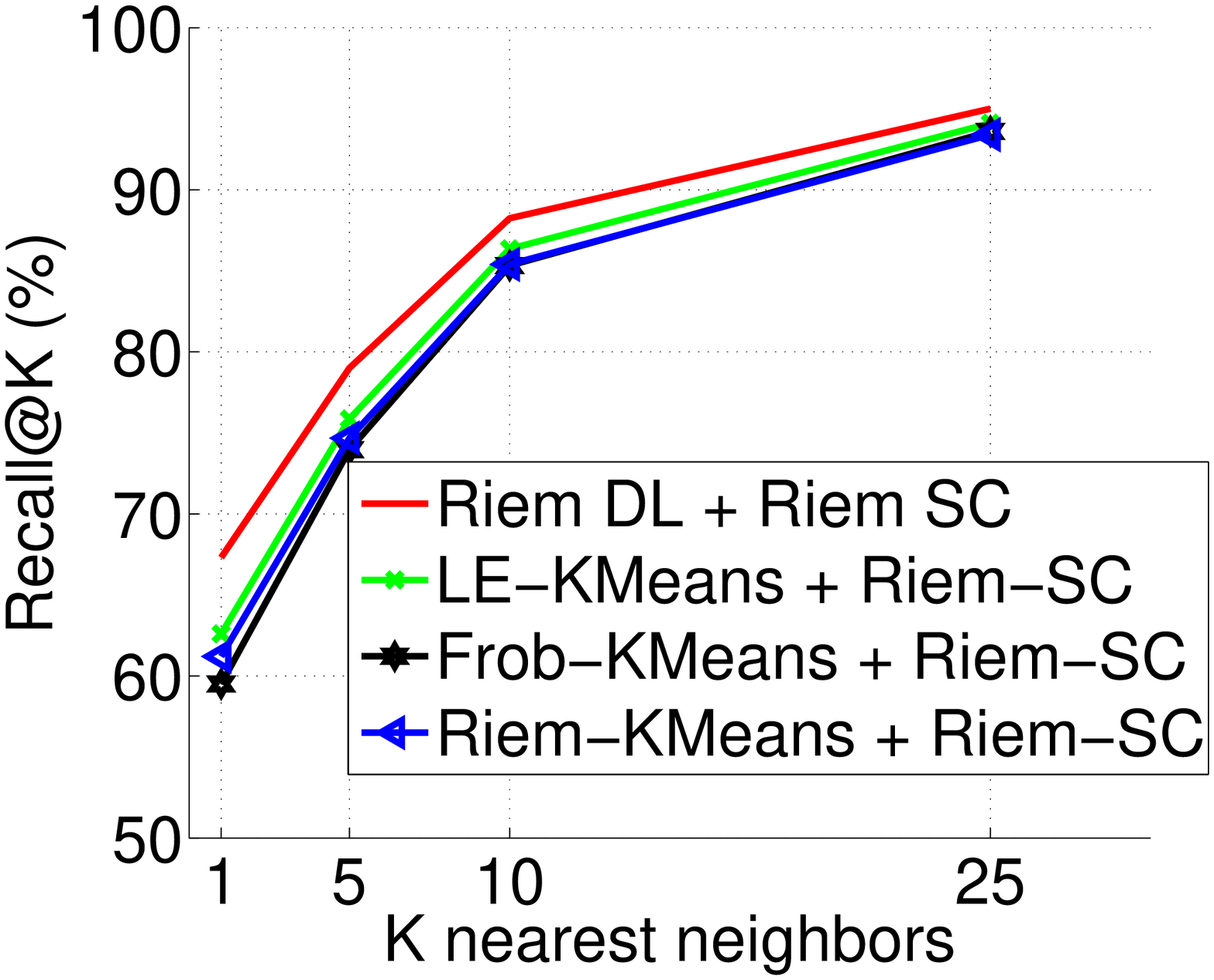}}
	\subfigure[RGB 3D objects]{\label{fig:Recall_K_clus_rgbd}\includegraphics[width=0.4\linewidth]{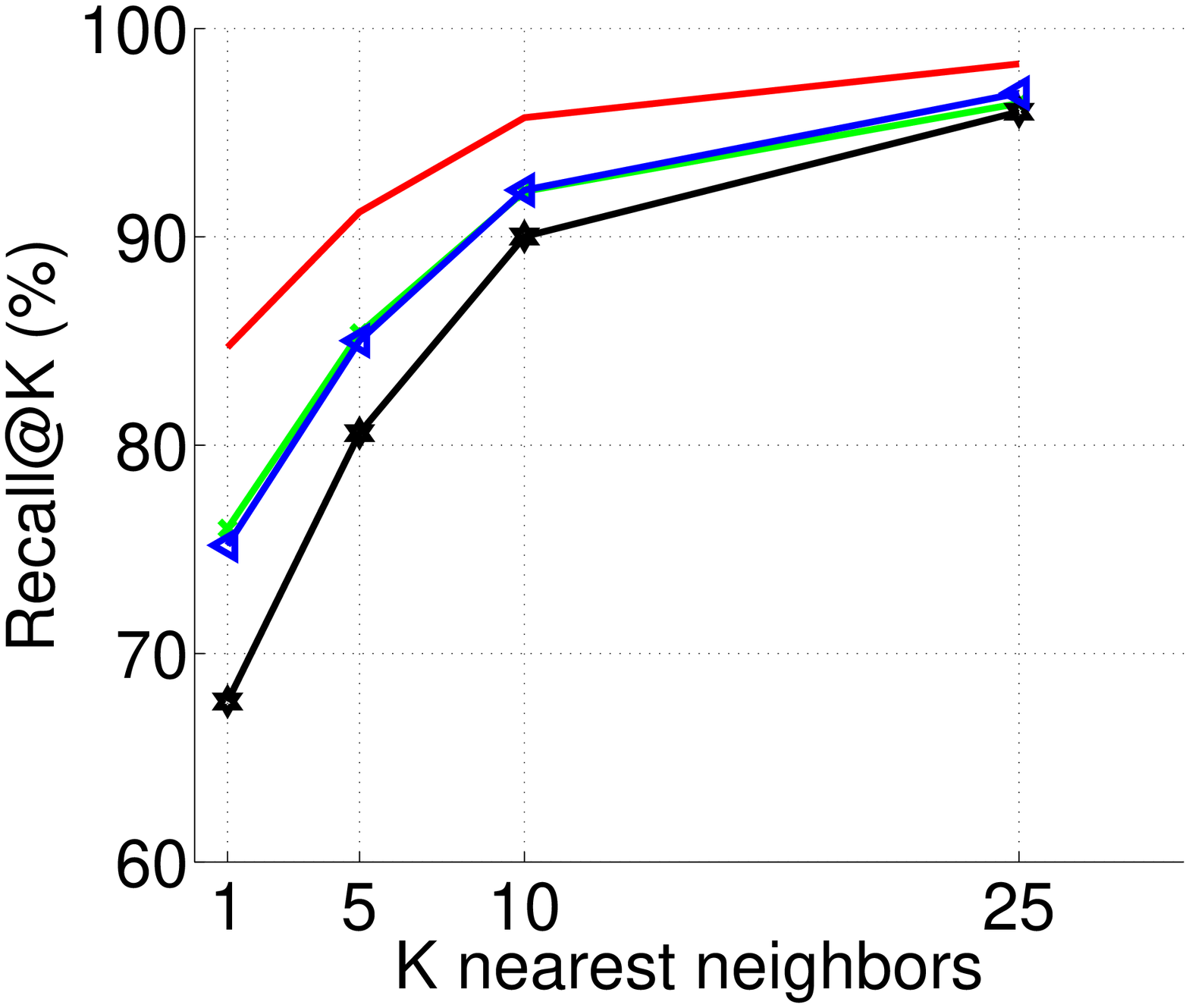}}\\
	\subfigure[ETHZ people]{\label{fig:Recall_K_clus_ethz}\includegraphics[width=0.4\linewidth]{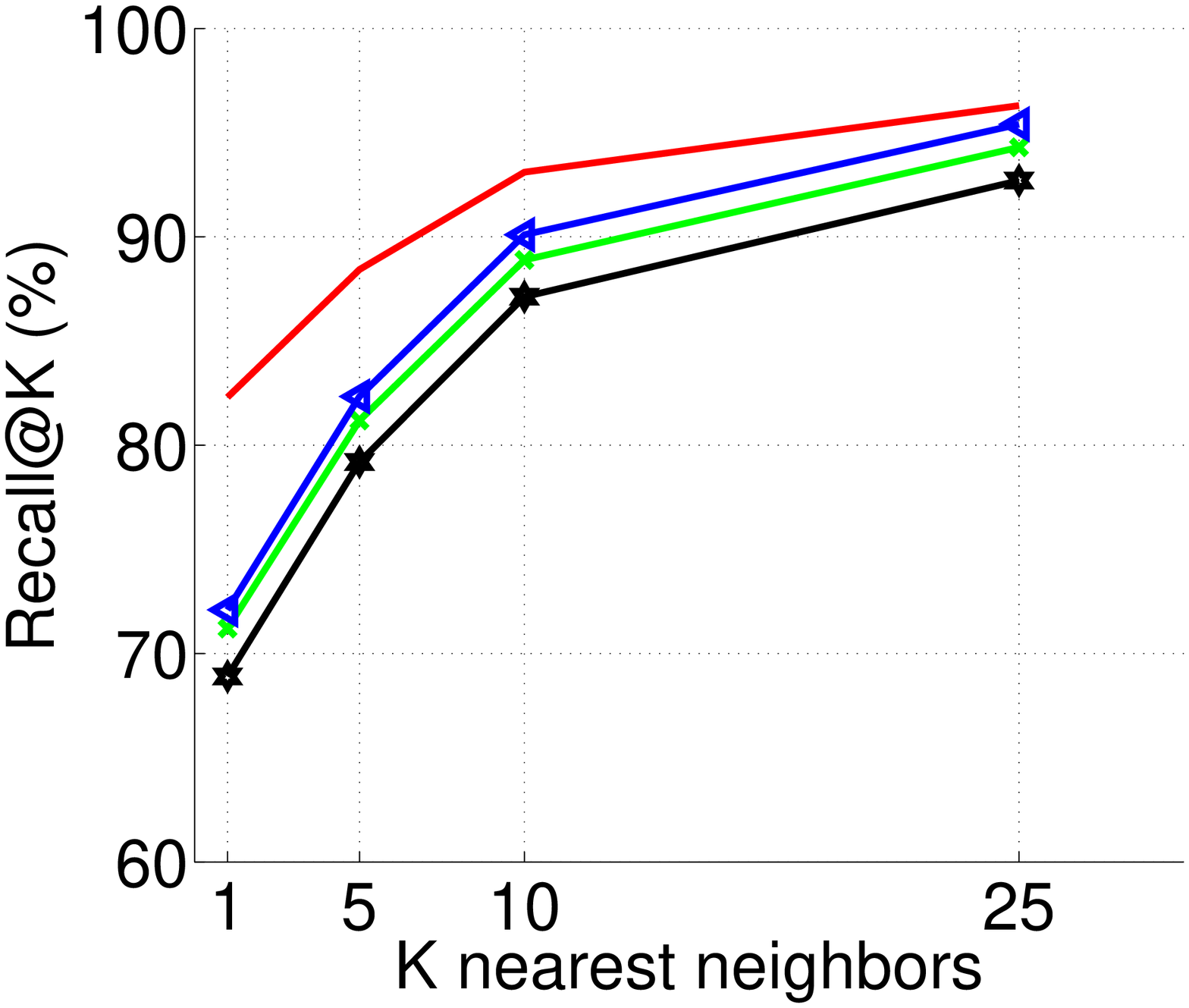}}
	\subfigure[Youtube faces]{\label{fig:Recall_K_clus_YT}\includegraphics[width=0.4\linewidth]{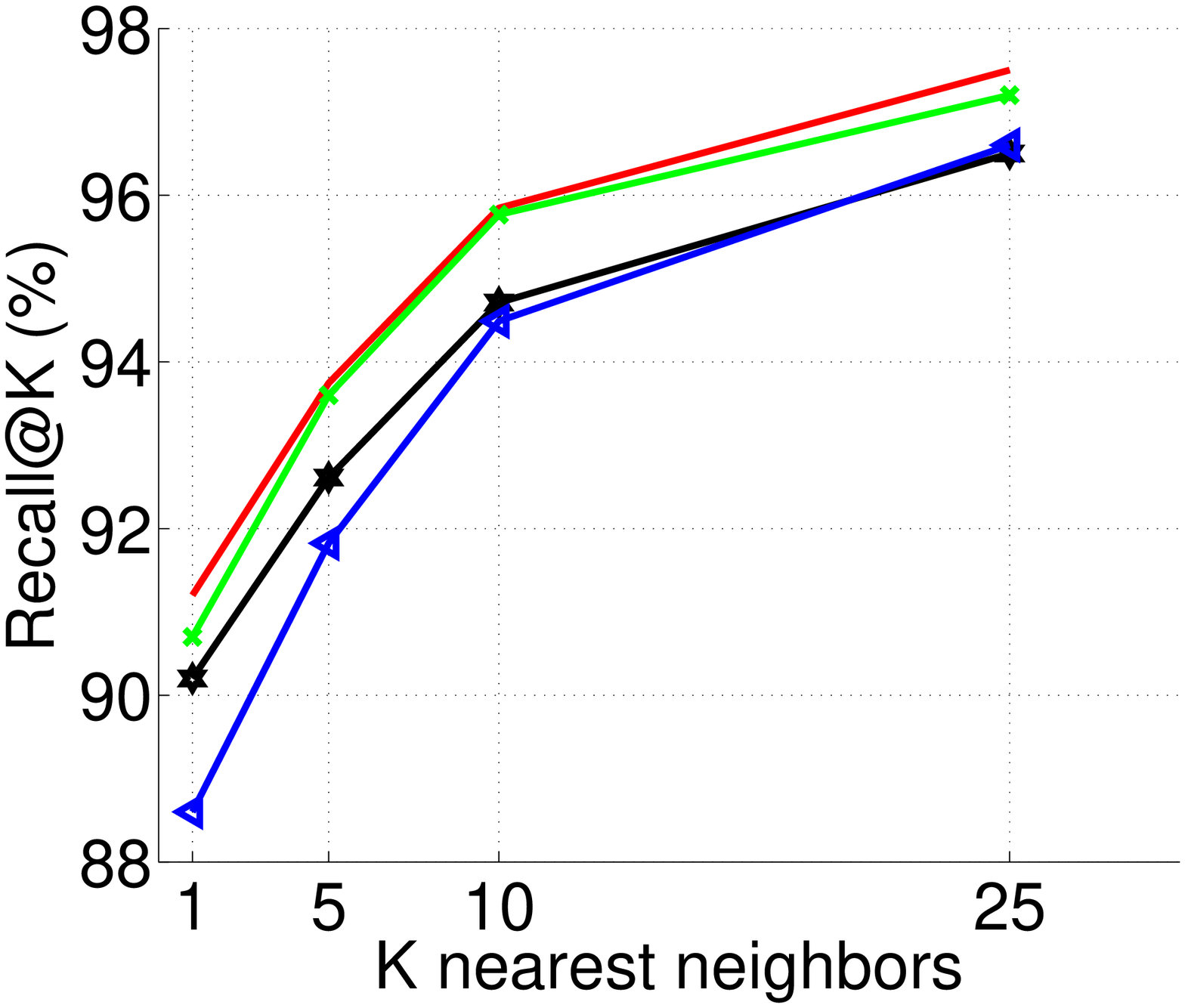}}	
		\caption{Results of Nearest neighbor recall@K accuracy against increasing number of retrieved points (K). Comparisons of Riem-DL and Riem-SC against other DL learning schemes based on clustering, while using our Riem-SC for sparse coding.}	  
		\label{fig:Recall@K_otherclustering}		
\end{figure*}

\textcolor{blue}{
\begin{table*}[!htb] 	
	\centering
	\parbox{.48\linewidth}
	{	
	\centering
	\begin{tabular}{cc}	
	Method & Accuracy (\%)\\
	\hline
	Riem-DL + Riem-SC        			& 74.9 \\
	LE-DL + LE-SC      						& 73.4 \\
	Frob-DL + Frob SC   					&  23.5 \\
	Kernelized LE DLSC~\cite{lilog}+~\cite{harandi2009riemannian}      & 47.9 \\	
	Kernelized Stein DLSC ~\cite{harandi2012sparse}+~\cite{harandi2009riemannian}	& $\textbf{76.7}$ \\
	Tensor DL+SC~\cite{sivalingam2015tensor}  & 37.1\\
	GDL~\cite{sra2011generalized} & 47.7 \\
	Random DL + Riem-SC 					& 70.3 \\
	\hline
	\end{tabular}
	\caption{Brodatz texture dataset}
	\label{tab:brodatz_results}
	}
	\hfill\vspace{4pt}
	\parbox{.48\linewidth}
	{
	\centering
	\begin{tabular}{cc}	
	Method & Accuracy (\%)\\
	\hline
	Riem-DL + Riem-SC        & 80.0 \\
	LE-DL + LE-SC      			 & 80.5 \\
	Frob-DL + Frob SC   		 & 54.2 \\
	Kernelized LE DLSC~\cite{lilog}+~\cite{harandi2009riemannian}      & \textbf{86.0}\\
	Kernelized Stein DLSC~\cite{harandi2012sparse}+~\cite{harandi2009riemannian}	& 85.7\\
	Tensor DL+SC~\cite{sivalingam2015tensor}          & 68.1 \\
	GDL~\cite{sra2011generalized} & 43.0 \\
	Random DL + Riem-SC 		&  62 \\
	\hline
	\end{tabular}	
	\caption{RGBD objects}
	\label{tab:rgbd_results}
	}			
%
	\parbox{.48\linewidth}
	{	
	\centering
	\begin{tabular}{cc}	
	Method & Accuracy (\%)\\
	\hline
	Riem-DL + Riem-SC        & 80.5	\\
	LE-DL + LE-SC      			 & 80.0 \\
	Frob-DL + Frob SC   		 & 77.6 \\
	Kernelized LE DLSC~\cite{lilog}+~\cite{harandi2009riemannian}     & \textbf{86.6} \\
	Kernelized Stein DLSC ~\cite{harandi2012sparse}~\cite{harandi2009riemannian}	& 71.6 \\
	Tensor DL+SC~\cite{sivalingam2015tensor}          & 67.4\\
	GDL~\cite{sra2011generalized} & 71.0\\
	Random DL + Riem-SC & 54.6 \\
	\hline
	\end{tabular}
	\caption{ETHZ People dataset}
	\label{tab:ethz_results}
	}
	\hfill\vspace{4pt}
	\parbox{.48\linewidth}
	{
	\centering
	\begin{tabular}{cc}	
	Method & Accuracy (\%)\\
	\hline
	Riem-DL + Riem-SC        & 92.4 \\
	LE-DL + LE-SC      			 & 82.6 \\
	Frob-DL + Frob SC   		 &  82.9 \\
	Kernelized LE DSC~\cite{lilog}+~\cite{harandi2009riemannian}   &  \textbf{93.1}\\
	Kernelized Stein DLSC~\cite{harandi2012sparse} +~\cite{harandi2009riemannian}& 70.1\\	
	GDL~\cite{sra2011generalized} & 92.0 \\
	Random DL + Riem-SC 		 &  83.9 \\
	\hline
	\end{tabular}	
	\caption{Youtube Faces Dataset}
	\label{tab:yt_results}
	}				
	{TABLES: Comparison of classification accuracy (using a linear SVM and one-against-all classification) with sparse coding when the dictionary is learned using the respective DL method. The standard deviation was less than 5\% for all methods.}
\end{table*}
}

Our goal in this experiment is to evaluate the performance of our DLSC framework to learn generic dictionaries on covariance descriptors produced from this application. Note that some of the classes in this dataset does not have enough instances to learn a specific dictionary for them. Several types of features have been suggested in literature for generating covariances on this dataset that have shown varying degrees of success such as Gabor wavelet based features~\cite{ma2012bicov}, color gradient based features~\cite{harandi2012sparse}, etc. Rather than detailing the results on several feature combinations, we describe here the feature combination that worked the best in our experiments. For this purpose, we used a validation set of 500 covariances and 10 true clusters from this dataset. The performance was evaluated using the Log-Euclidean SC setup with a dictionary learning via Log-Euclidean K-Means. We used a combination of nine features for each image as described below:
\begin{align}
F_{ETHZ} &= \left[x\ I_r\ I_g\ I_b\ Y_i\  \abs(I_x)\ \abs(I_y)\right.\\\nonumber
					&\quad\quad \left.\abs(\sin(\theta)+\cos(\theta))\ \abs(H_y)\right],
\end{align}
where $x$ is the x-coordinate of a pixel location, $I_r,I_g,I_b$ are the RGB color of a pixel, $Y_i$ is the pixel intensity in the YCbCr color space, $I_x,I_y$ are the gray scale pixel gradients, and $H_y$ is the y-gradient of pixel hue. Further, we also use the gradient angle $\theta=tan^{-1}(I_y/I_x)$ in our feature set. Each image is resized to a fixed size $300\times 100$, and is divided into upper and lower parts. We compute two different region covariances for each part, which are combined as two block diagonal matrices to form a single covariance descriptor of size $18\times 18$ for each appearance image.

\subsubsection{3D Object Recognition Dataset}
The goal of this experiment is to recognize objects in 3D point clouds. To this end, we used the public RGB-D Object dataset~\cite{lai2011large}, which consists of about 300 objects belonging to 51 categories and spread in about 250K frames. We used approximately 15K frames for our evaluation with approximately 250-350 frames devoted to every object seen from three different view points (30, 45, and 60 degrees above the horizon). Following the procedure suggested in~\cite{fehr2013covariance}[Chap.~5], for every frame, the object was segmented out and 18 dimensional feature vectors generated for every 3D point in the cloud (and thus $18\times 18$ covariance descriptors); the features we used are as follows:
\begin{align}
F_{RGBD} &= \left[x,y,z, I_r, I_g, I_b, I_x,I_y, I_{xx}, I_{yy}, I_{xy},\right. \nonumber\\
& \left.I_m, \delta_x, \delta_y,\delta_{m},\nu_x, \nu_y, \nu_z\right],
\end{align}
where the first three dimensions are the spatial coordinates, $I_m$ is the magnitude of the intensity gradient, $\delta$'s represent gradients over the depth-maps, and $\nu$ represents the surface normal at the given 3D point. 

\subsubsection{Youtube Faces Dataset}
In this experiment, we evaluate the performance of the Riemannian DLSC setup to deal with a larger dataset of high-dimensional covariance descriptors for face recognition. To this end, we used the challenging Youtube faces dataset~\cite{youtubefaces} that consists of 3425 short video clips of 1595 individuals, each clip containing between 48--6K frames. There are significant variations in head pose, context, etc. for each person across clips and our goal is to associate a face with its ground truth person label. We proceed by first cropping out face regions from the frames by applying a state-of-the-art face detector~\cite{zhu2012face}, which results in approximately 196K face instances. As most of the faces within a clip do not have significant variations, we subsample this set randomly to generate our dataset of $\sim$43K face patches. Next, we convolved the image with a filter bank of 40 Gabor filters with 5 scales and 8 different orientations to extract the facial features for each pixel, generating $40\times 40$ covariances.

\begin{figure*}[htbp]
	\centering	
	\subfigure[Brodatz textures]{\label{fig:Recall_K_otherDL_textures}\includegraphics[width=0.4\linewidth]{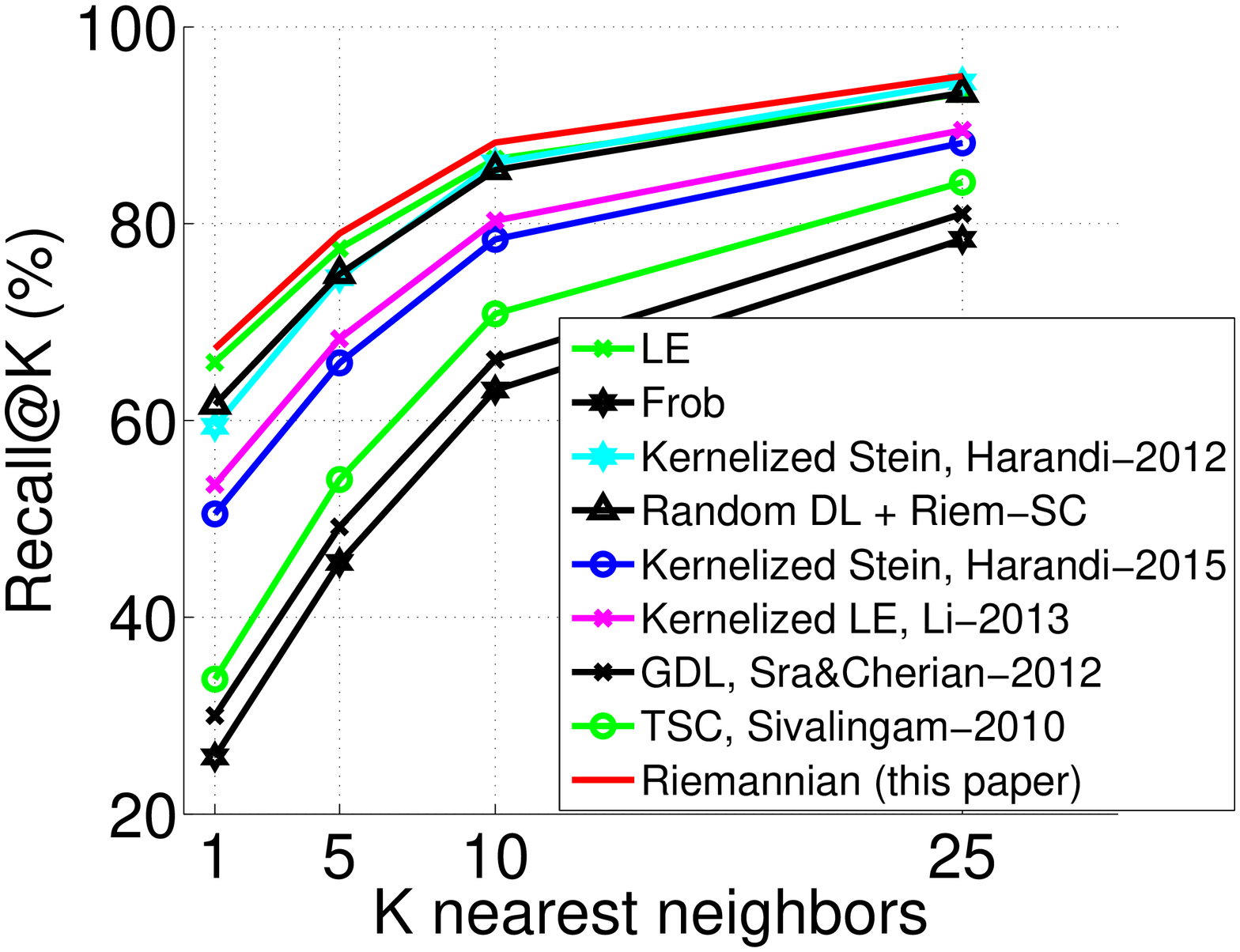}}
	\subfigure[RGB 3D objects]{\label{fig:Recall_K_otherDL_rgbd}\includegraphics[width=0.4\linewidth]{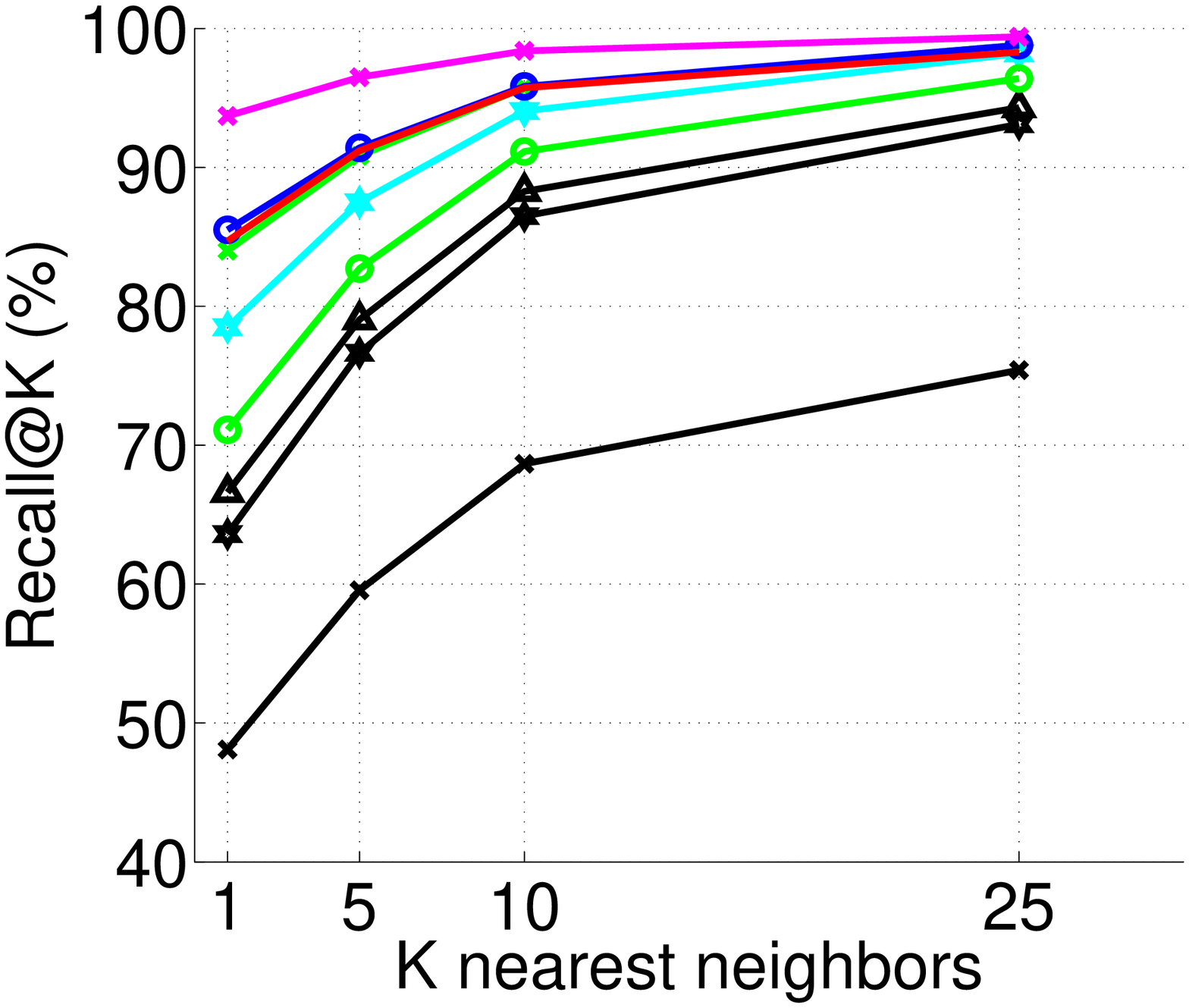}}\\
	\subfigure[ETHZ people]{\label{fig:Recall_K_otherDL_ethz}\includegraphics[width=0.4\linewidth]{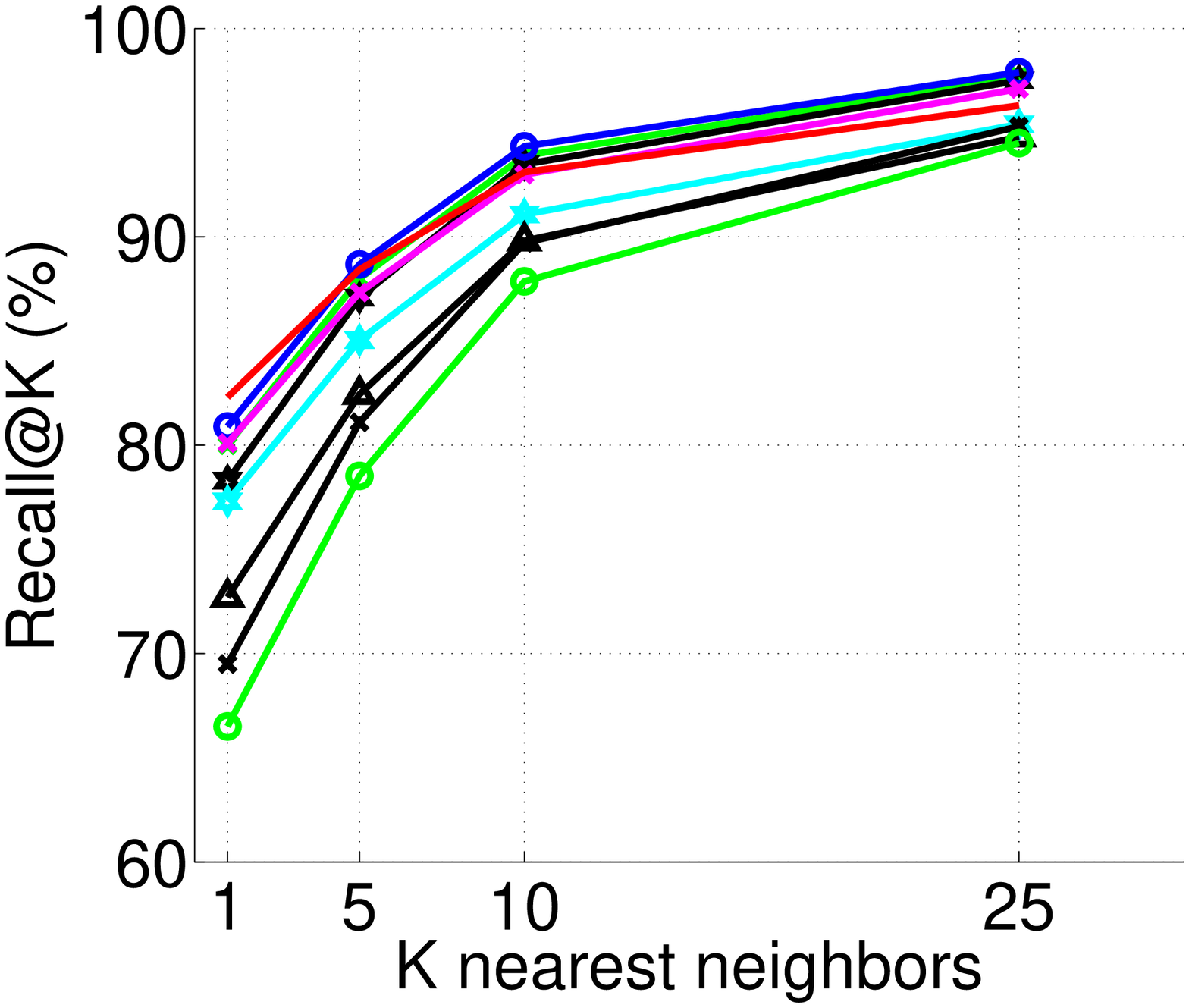}}	
	\subfigure[Youtube faces]{\label{fig:Recall_K_otherDL_YT}\includegraphics[width=0.4\linewidth]{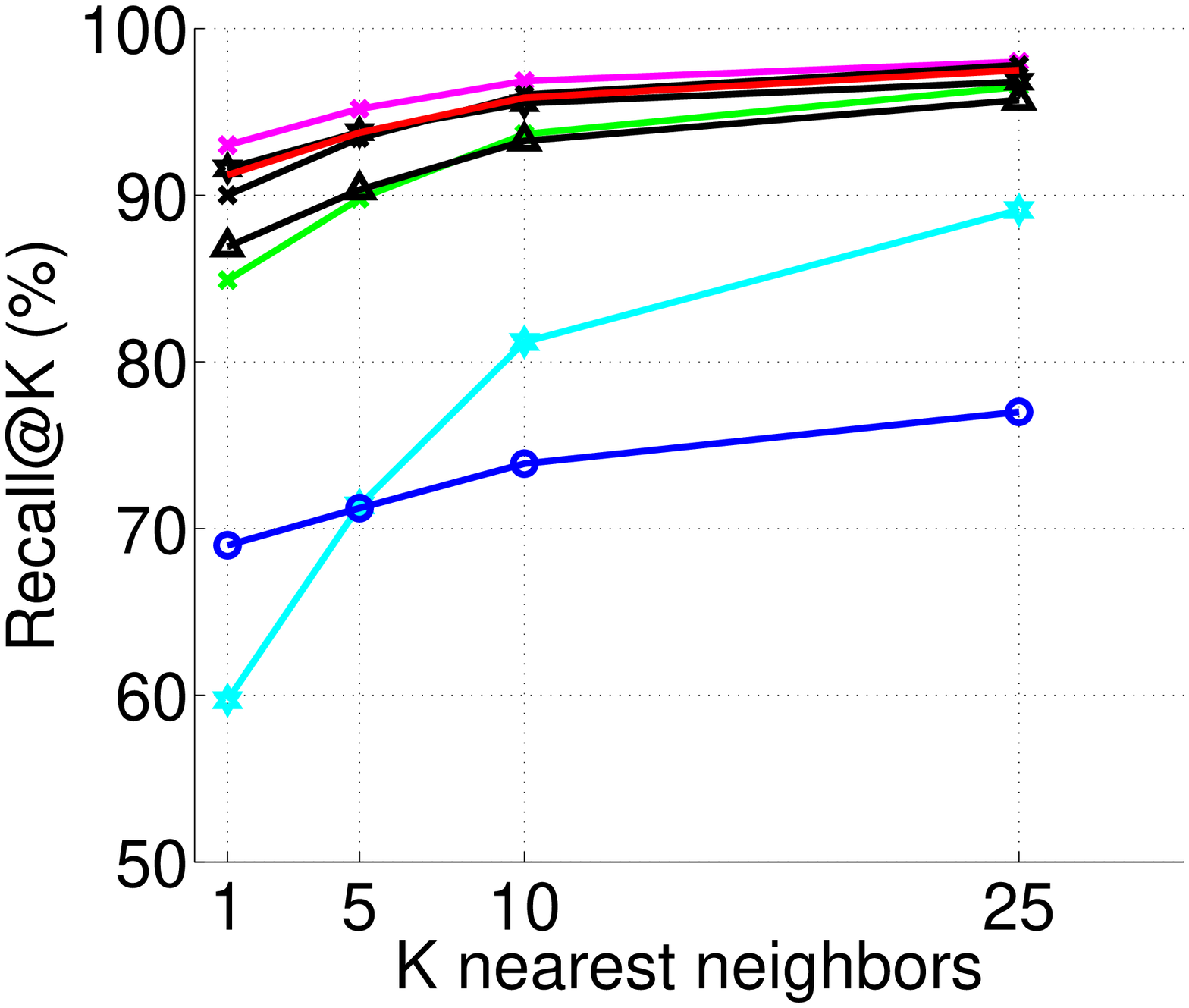}}
		\caption{Results of k-nearest neighbor retrieval accuracy against an increasing number of retrieved points (K). Comparisons of Riem-DL and Riem-SC against the state of the art dictionary learning and sparse coding combinations.}	  
		\label{fig:Recall_K_otherDL}		
\end{figure*}

\subsection{Experimental Setup}
\subsubsection{Evaluation Techniques}
We evaluated our algorithms from two perspectives, namely (i) nearest neighbor (NN) retrieval against a gallery set via computing the Euclidean distances between sparse codes, and (ii) one-against-all classification using a linear SVM trained over the sparse codes. Given that computing the geodesic distance between SPD matrices is expensive, while the Frobenius distance between them results in poor accuracy, the goal of the first experiment is to evaluate the quality of sparse coding to approximate the input data in terms of codes that belong to the non-negative orthant of the Euclidean space -- superior performance implying that the sparse codes provide efficient representations that could bypass the Riemannian geometry, and can enable other faster indexing schemes such as locality sensitive hashing for faster retrieval. Our second experiment evaluates the linearity of the space of sparse codes -- note that they are much higher dimensional than the original covariances themselves and thus we expect them to be linearly separable in the sparse space. 

\subsubsection{Evaluation Metric} 
For classification experiments, we use the one-against-all classification accuracy as the evaluation metric. For NN retrieval experiments, we use the Recall@K accuracy, which is defined as follows. Given a gallery $\dataset$ and a query set $\mathcal{Q}$. Recall@K computes the average accuracy when retrieving $K$ nearest neighbors from $\dataset$ for each instance in $\mathcal{Q}$. Suppose $G_K^q$ stands for the set of ground truth class labels associated with the $q$th query, and if $S^q_K$ denotes the set of labels associated with the $K$ neighbors found by some algorithm for the $q$ queries, then
\begin{equation}
	Recall@K = \frac{1}{|Q|} \sum_{q\in Q}\frac{|G_K^q\cap S_K^q|}{|G^q_K|}.
	\label{eq:accuracy}
\end{equation} 

\subsubsection{Data Split}
All the experiments used 5-fold cross-validation in which 80\% of the datasets were used for training the dictionary, $10\%$ for generating the gallery set or as training set for the linear SVM, and the rest as the test/query points. We evaluate three setups for generating the dictionaries, (i) using a proper dictionary learning strategy, and (ii) using clustering the training set via K-Means using the appropriate distance metric, and (iii) random sampling of the training set. 

\subsubsection{Hyperparameters}
The size of the dictionary was considered to be twice the number of classes in the respective dataset. This scheme was considered for all the comparison methods as well. We experimented with larger sizes, but found that performance generally almost saturates. This is perhaps because the datasets that we use already have a large number of classes, and thus the dictionary sizes generated using this heuristic makes them already significantly overcomplete. The other hyperparameter in our setup is the sparsity of the generated codes. \textcolor{blue}{As the different sparse coding methods (including ours and the methods that we compare to) have varied sensitivity to the regularization parameter, comparing all the methods to different sparsities turned out to be cumbersome. Thus, we decided to fix the sparsity of all methods to 10\%-sparse and adjusted the regularization parameter for each method appropriately (on a small validation set separate from the training set). To this end, we used $\lambda_B = 0.1$ for textures, $10$ for the ETHZ and RGBD datasets, and 100 for the faces dataset. For the faces dataset, we found it to be difficult to attain the desired sparsity by tuning the regularization parameter. Thus, we used a regularization of 100 and selected the top 10\% sparse coefficients.}

\subsubsection{Implementation Details}
Our DLSC scheme was implemented in MATLAB. We used the ManOpt Riemannian geometric optimization toolbox~\cite{boumal2014manopt} for implementing the CG method in our DL sub-problem. As our problem is non-convex, we found that initializing the dictionary learning setup using K-Means clustering	(using the Karcher mean algorithm~\cite{pennec2006riemannian}) demonstrate faster convergence.

\begin{table*}[!htb] 
	\centering
	\parbox{.48\linewidth}
	{	
	\centering
	\begin{tabular}{cc}	
	Method & Accuracy (\%)\\
	\hline
	Riem-DL + Riem-SC        & \textbf{74.9} \\
	LE-KMeans + Riem-SC      & 70.0 \\
	Frob-KMeans + Riem-SC    & 66.5 \\
	Riem-KMeans +Riem-SC     & 70.0 \\
	\hline
	\end{tabular}
	\caption{Brodatz texture dataset}
	\label{tab:brodatz_results_clustering}
	}
	\hfill\vspace{4pt}
	\parbox{.48\linewidth}
	{
	\centering
	\begin{tabular}{cc}	
	Method & Accuracy (\%)\\
	\hline
	Riem-DL + Riem-SC        & \textbf{80.0} \\
	LE-KMeans + Riem-SC      & 66.2 \\
	Frob-KMeans + Riem-SC    & 61.1 \\
	Riem-KMeans +Riem-SC     & 67.8 \\
	\hline
	\end{tabular}	
	\caption{RGBD objects}
	\label{tab:rgbd_results_clustering}
	}			
\end{table*}
\begin{table*}[!htb] 
	\centering
	\parbox{.48\linewidth}
	{	
	\centering
	\begin{tabular}{cc}	
	Method & Accuracy (\%)\\
	\hline
	Riem-DL + Riem-SC        & \textbf{80.5}	\\
	LE-KMeans + Riem-SC      & 54.9 \\
	Frob-KMeans + Riem-SC    & 55.5 \\
	Riem-KMeans +Riem-SC     & 57.5 \\
	\hline
	\end{tabular}
	\caption{ETHZ People dataset}
	\label{tab:ethz_results_clustering}
	}
	\hfill\vspace{4pt}
	\parbox{.48\linewidth}
	{
	\centering
	\begin{tabular}{cc}	
	Method & Accuracy (\%)\\
	\hline
	Riem-DL + Riem-SC        & \textbf{92.4} \\
	LE-KMeans + Riem-SC      & 87.1 \\
	Frob-KMeans + Riem-SC    & 88.7 \\
	Riem-KMeans +Riem-SC     & 85.8 \\
	\hline
	\end{tabular}	
	\caption{Youtube Faces Dataset}
	\label{tab:yt_results_clustering}
	}			
	{TABLES: Comparison of classification accuracy (using a linear SVM and one-against-all classification) using Riemannian sparse coding (Riem-SC)
while the dictionary atoms are taken as the centroids of K-Means clusters. The standard deviation was less than 8\% for all methods.}
\end{table*}

\subsection{Results}
In this section, we compare the performance of our dictionary learning (Riem-DL) and sparse coding (Riem-SC) method against several state of the art DLSC schemes on the four datasets that we described above. Our choice of comparison methods include (i) Riemannian geometric methods such as log-Euclidean (LE-DL + LE-SC), \textcolor{blue}{(ii) Kernelized methods using the Stein kernel (Kernelized Stein) with the framework in~\cite{harandi2012sparse}, (iii) Kernelized Stein using the recent generic framework in~\cite{harandi2009riemannian}, (iv) Kernelized Log-Euclidean metric proposed in~\cite{lilog} but using the generic framework in~\cite{harandi2009riemannian}, (v) Euclidean DLSC (Frob-DL + Frob-SC), (vi) using a dictionary generated by random sampling the dataset followed by sparse coding using our Riemannian method (Random-DL + Riem-SC), (vii) using the tensor dictionary learning sparse coding (TSC) setup~\cite{sivalingam2015tensor}, and (viii) generalized dictionary learning~\cite{sra2011generalized}.} In Figure~\ref{fig:Recall_K_otherDL}, we show the performance comparison for the task of K-NN where $K$ is increased from one to 25. In Tables~\ref{tab:brodatz_results},~\ref{tab:rgbd_results},~\ref{tab:ethz_results}, and \ref{tab:yt_results}, we show the performance for the one-against-all classification setup. 


An alternative to dictionary learning that is commonly adopted is to approximate the dictionary by using the centroids of clusters generated from a K-Means clustering of the dataset. Such a method is faster in comparison to a Riemannian DL, while also demonstrate reasonable performance~\cite{cherian2014riemannian,sivalingam2010tensor}. Thus, an important experiment with regard to learning the dictionary is to make sure using dictionary learning provides superior performance compared to this ad hoc setup. In Figure~\ref{fig:Recall@K_otherclustering}, we plot the K-NN retrieval when we use a clustering scheme to generate the dictionary. In Tables~\ref{tab:brodatz_results_clustering},~\ref{tab:rgbd_results_clustering},~\ref{tab:ethz_results_clustering}, and \ref{tab:yt_results_clustering}, we show the same in a classification setup.

\subsection{Discussion of Results}
With regard to Figure~\ref{fig:Recall_K_otherDL}, we found that the performance of different methods is diverse across datasets. For example, the log-euclidean DLSC variant (LE-DL+LE-SC) is generally seen to showcase good performance across datasets. \textcolor{blue}{The kernelized DLSC methods (Kernelized Stein and Kernelized LE) demonstrate superior performance on almost all the datasets. The most surprising of the results that we found was for the Frob-DL case. It is generally assumed that using Frobenius distance for comparing SPD matrices leads to poor accuracy, which we see in Figures~\ref{fig:Recall_K_otherDL_textures},~\ref{fig:Recall_K_otherDL_rgbd}, and~\ref{fig:Recall_K_otherDL_ethz}. However, for the Youtube faces dataset, we found that the SPD matrices are poorly conditioned. As a result, taking the logarithm (as in the LE-DL scheme) of these matrices results in amplifying the influence of the smaller eigenvalues, which is essentially noise. When learning a dictionary, the atoms will be learned to reconstruct this noise against the signal, thus leading to inferior accuracy than for FrobDL or GDL which do not use matrix logarithm. We tried to circumvent this problem by tuning the small regularization that we add to the diagonal entries of these matrices, but that did not help.  Other older DLSC methods such as TSC are seen to be less accurate compared to recent methods. We could not run the TSC method on the faces dataset as it was found to be too slow to sparse code the larger covariances. In comparison to all the above methods, Riem-DL+Riem-SC was found to produce consistent, competitive (and sometimes better) performance, substantiating the usefulness of our proposed method. While running the experiments, we found that the initialization of our DL sub-problem (from K-Means) played an important role in achieving this superior performance. In Tables~\ref{tab:brodatz_results},~\ref{tab:rgbd_results},~\ref{tab:ethz_results}, and \ref{tab:yt_results} we show the results for classification using the sparse codes. The kernelized LE seems to be significantly better in this setting. However, our Riemannian scheme does demonstrate promise by being the second best in most of the datasets.}

The usefulness of our Riem-DL is further evaluated against alternative DL schemes via clustering in Figure~\ref{fig:Recall@K_otherclustering}. We see that learning the dictionary using Riem-DL demonstrates the best performance against the next best and efficient alternative of using the LE-KMeans that was done in~\cite{cherian2014riemannian}. Using Frob-KMeans or using a random dictionary are generally seen to have inferior performance compared to other learning methods. In Tables~\ref{tab:brodatz_results_clustering},~\ref{tab:rgbd_results_clustering},~\ref{tab:ethz_results_clustering}, and \ref{tab:yt_results_clustering}, a similar trend is seen in the classification setting.


\section{Conclusions}
\label{sec:conclude}
In this paper, we proposed a novel setup for dictionary learning and sparse coding of data in the form of SPD matrices. In contrast to prior methods that use proxy similarity distance measures to define the sparse coding approximation loss, our formulation used a loss driven by the natural Riemannian metric (affine invariant Riemannian metric) on the SPD manifold. We proposed an efficient adaptation of the well-known non-linear conjugate gradient method for learning the dictionary in the product space of SPD manifolds and a fast algorithm for sparse coding based on the spectral projected gradient. Our experiments on simulated and several benchmark computer vision datasets demonstrated the superior performance of our method against prior works; especially our results showed that learning the dictionary using our scheme leads to significantly better accuracy (in retrieval and classification) than other heuristic and approximate schemes to generate the dictionary.
\appendix
\label{app:proof_of_theorem_opt}
Here we prove Theorem~\ref{thm:opt}.
\begin{lemma}
  \label{lem:psd}
  Let $Z \in \genlinear(d)$ and let $X \in \spd{d}$. Then, $Z^TXZ \in \spd{d}$.
\end{lemma}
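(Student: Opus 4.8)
The plan is to verify directly the two defining properties of membership in $\spd{d}$ for the congruence $Z^TXZ$: symmetry and strict positive definiteness. Symmetry will be immediate from the symmetry of $X$, so the only substantive point is the positivity of the associated quadratic form, and it is there that the invertibility hypothesis $Z \in \genlinear(d)$ must be used.

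First I would dispose of symmetry by a one-line computation: $(Z^TXZ)^T = Z^TX^T(Z^T)^T = Z^TXZ$, where the middle equality uses $X = X^T$. Hence $Z^TXZ \in \sym{d}$, and it remains only to check definiteness. For that, I would fix an arbitrary nonzero vector $v \in \reals{d}$ and rewrite the quadratic form as $v^T(Z^TXZ)v = (Zv)^TX(Zv)$; setting $w := Zv$, the claim reduces to showing $w^TXw > 0$.

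The single place requiring any care — the ``main obstacle,'' insofar as such an elementary statement has one — is guaranteeing that $w \neq 0$, and this is precisely where the assumption $Z \in \genlinear(d)$ (as opposed to merely $Z \in \reals{d\times d}$) is indispensable: if $Z$ were singular it could annihilate some nonzero $v$, collapsing the form to zero and giving only positive \emph{semi}definiteness. Since $Z$ is invertible and $v \neq 0$, we have $w = Zv \neq 0$, whereupon positive definiteness of $X$ yields $w^TXw > 0$. As $v$ was arbitrary, the form is strictly positive, and combined with symmetry this establishes $Z^TXZ \in \spd{d}$, completing the proof. This lemma then feeds into the proof of Theorem~\ref{thm:opt} by certifying that the relevant congruence-transformed matrices remain in the SPD cone.
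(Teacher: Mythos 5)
Your proof is correct, and it is the standard argument: symmetry by direct transposition, and strict positivity of the quadratic form $v^T(Z^TXZ)v = (Zv)^TX(Zv)$ using invertibility of $Z$ to ensure $Zv \neq 0$. The paper itself states this lemma without any proof (treating it as a standard fact about congruence transformations), so your write-up simply supplies the canonical argument the authors omitted, correctly identifying where the hypothesis $Z \in \genlinear(d)$ is needed.
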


\begin{lemma}
  The Fr\'echet derivative~\cite[see e.g., Ch.~1]{higham08} of the map $X \mapsto \log X$ at a point $Z$ in the  direction $E$ is given by
  \small{
  \begin{equation}
    \label{eq:8}
    D\log(Z)(E) = \sint_0^1(\beta Z + (1-\beta)I)^{-1}E(\beta Z + (1-\beta)I)^{-1}d\beta.
  \end{equation}
  }
\end{lemma}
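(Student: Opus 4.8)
The plan is to derive the formula by differentiating a convenient integral representation of the matrix logarithm, which avoids diagonalizing $Z$ and working with divided differences. First I would record the representation
\[
\log Z = \int_0^1 (Z - \eye)\,\bigl[\,tZ + (1-t)\eye\,\bigr]^{-1}\, dt,
\]
which holds whenever $Z$ has no eigenvalue on $(-\infty,0]$; in the setting where this lemma is applied $Z$ is SPD, so the pencil $A_t := tZ + (1-t)\eye$ is positive definite, hence invertible, for every $t\in[0,1]$, and the integrand is jointly smooth in its arguments. The identity itself reduces, upon simultaneously diagonalizing the commuting matrices $Z$ and $A_t$, to the elementary scalar fact that $\int_0^1 (z-1)/\bigl(t(z-1)+1\bigr)\,dt = \log z$.

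Next I would differentiate under the integral sign in the direction $E$, that is, compute $\tfrac{d}{ds}\big|_{s=0}\log(Z+sE)$. Writing $A_t(s) := t(Z+sE)+(1-t)\eye$, so that $A_t(0)=A_t$ and $\partial_s A_t(s)=tE$, and using the derivative-of-inverse identity $\tfrac{d}{ds}A_t(s)^{-1} = -A_t^{-1}(tE)A_t^{-1}$ evaluated at $s=0$, the product rule produces the integrand
\[
E\,A_t^{-1} \;-\; t\,(Z-\eye)\,A_t^{-1}\,E\,A_t^{-1}.
\]
The crucial simplification is the algebraic identity $t(Z-\eye) = A_t - \eye$, which gives $t(Z-\eye)A_t^{-1} = \eye - A_t^{-1}$; substituting it collapses the two terms into the single expression $A_t^{-1}E\,A_t^{-1}$. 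Integrating in $t$ then yields
\[
D\log(Z)(E) = \int_0^1 \bigl(tZ+(1-t)\eye\bigr)^{-1} E\,\bigl(tZ+(1-t)\eye\bigr)^{-1}\,dt,
\]
which is exactly~\eqref{eq:8} after renaming $t$ as $\beta$.

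The only genuine technical points, which I would dispatch briefly, are (i) legitimacy of differentiating under the integral: on the compact interval $[0,1]$ the maps $(s,t)\mapsto A_t(s)^{-1}$ and $(s,t)\mapsto \partial_s A_t(s)^{-1}$ are jointly continuous and uniformly bounded for $s$ in a neighbourhood of $0$, since the $A_t(s)$ remain uniformly invertible there; and (ii) validity of the integral representation, which only requires the spectrum of $Z$ to avoid the branch cut of the principal logarithm, automatic for SPD $Z$. I expect the main obstacle to be bookkeeping rather than conceptual: keeping the noncommutative operator ordering straight through the product-rule and inverse-derivative step, and spotting the cancellation $t(Z-\eye)A_t^{-1}=\eye - A_t^{-1}$ that fuses the two terms. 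As a sanity check and alternative route, one can instead diagonalize $Z=U\Lambda U^{\top}$ and identify $D\log(Z)(E)$ with the Daleckii--Krein divided-difference kernel $(\log\lambda_i-\log\lambda_j)/(\lambda_i-\lambda_j)$ in the eigenbasis; matching this to the stated integral uses the same scalar identity but is less self-contained.
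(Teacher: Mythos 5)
Your derivation is correct, and it is worth noting that the paper itself gives no proof at all for this lemma: it simply cites Higham's book (Ch.~11), where formula~\eqref{eq:8} appears as a known result. Your argument supplies the missing self-contained derivation. Each step checks out: the representation $\log Z = \int_0^1 (Z-I)\bigl(tZ+(1-t)I\bigr)^{-1}dt$ is valid for any $Z$ with spectrum off $(-\infty,0]$ (and reduces to the scalar identity you state after diagonalizing $Z$, since $A_t$ is a polynomial in $Z$ and hence commutes with it); differentiation under the integral is legitimate by the uniform invertibility of $A_t(s)$ for $s$ near $0$ and $t\in[0,1]$; and the product rule together with $D(A^{-1})(E)=-A^{-1}EA^{-1}$ gives the integrand $EA_t^{-1}-t(Z-I)A_t^{-1}EA_t^{-1}$, which the identity $t(Z-I)=A_t-I$ collapses to $A_t^{-1}EA_t^{-1}$ exactly as you claim. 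Your closing remark about the Daleck\u{\i}i--Krein kernel also matches: integrating $\bigl((t\lambda_i+1-t)(t\lambda_j+1-t)\bigr)^{-1}$ over $[0,1]$ by partial fractions recovers the divided difference $(\log\lambda_i-\log\lambda_j)/(\lambda_i-\lambda_j)$. What your route buys is self-containedness at the cost of about a page; what the paper's citation buys is brevity and, implicitly, the more general statement (arbitrary $Z$ with no spectrum on the closed negative real axis) without having to police the hypotheses, though for the paper's application to matrices of the form $SMS$ with $S,M$ SPD, your restriction to positive definite $Z$ is all that is ever needed.
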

\begin{proof}
 See e.g.,~\cite[Ch.~11]{higham08}.
\end{proof}
\begin{corollary}
  \label{cor:log}
  Consider the map $\ell(\alpha) := \alpha \in \reals{n}_+ \mapsto \trace(\log(SM(\alpha)S)H)$, where $M$ is a map from $\reals{n}_+ \to \spd{d}$ and $H \in \sym{d}$, $S \in \spd{d}$. Then, for $1\le p \le n$, we have
  \begin{equation*}
    \tfrac{\partial\ell(\alpha)}{\partial \alpha_p} = \sint_0^1\trace[K_\beta S\tfrac{\partial M(\alpha)}{\partial \alpha_p}SK_\beta H]d\beta,
  \end{equation*}
  where $K_\beta := (\beta SM(\alpha)S + (1-\beta)I)^{-1}$.
\end{corollary}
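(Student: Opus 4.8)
The plan is to compute the partial derivative by the chain rule, viewing $\ell$ as the composition of the congruence map $\alpha\mapsto Z(\alpha):=SM(\alpha)S$ with the scalar-valued matrix function $Z\mapsto\trace(\log(Z)H)$, and then to substitute the integral representation~\eqref{eq:8} of the Fr\'echet derivative of the matrix logarithm.

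First I would verify that the composition is well-defined and smooth along each coordinate. Since $M(\alpha)\in\spd{d}$ and $S\in\spd{d}$ is invertible, Lemma~\ref{lem:psd} (applied with $Z=S$ and $X=M(\alpha)$; note $S^{T}=S$) guarantees $Z(\alpha)=S^{T}M(\alpha)S=SM(\alpha)S\in\spd{d}$ for every $\alpha\in\reals{n}_+$. Hence the principal logarithm $\log Z(\alpha)$ is real and $\log$ is Fr\'echet-differentiable at $Z(\alpha)$, and (under the assumed differentiability of $M$) the inner map has partial derivative $\partial Z(\alpha)/\partial\alpha_p=S\,(\partial M(\alpha)/\partial\alpha_p)\,S$, because $S$ is constant in $\alpha$.

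Next I would differentiate the outer map. Because $A\mapsto\trace(AH)$ is a bounded linear functional, the directional derivative of $\trace(\log(\cdot)H)$ at $Z$ in a direction $E$ equals $\trace\bigl(D\log(Z)(E)\,H\bigr)$. Applying the chain rule with $E=\partial Z(\alpha)/\partial\alpha_p$ and inserting~\eqref{eq:8}, in which $(\beta Z(\alpha)+(1-\beta)I)^{-1}$ is exactly the matrix $K_\beta$, yields
\[
\frac{\partial\ell(\alpha)}{\partial\alpha_p}
=\trace\!\left(\left[\int_0^1 K_\beta\,S\frac{\partial M(\alpha)}{\partial\alpha_p}\,S\,K_\beta\,d\beta\right]H\right).
\]
Finally, since the trace is a finite linear operation and the integrand is continuous on the compact interval $[0,1]$, I would move the trace inside the $\beta$-integral to obtain the claimed formula.

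The step I expect to demand the most care is the chain-rule composition itself: one must correctly couple the matrix-valued Fr\'echet derivative $D\log(Z)(\cdot)$ with the scalar functional $\trace(\cdot\,H)$ and confirm that taking the directional derivative commutes with the trace. Once this coupling is in place, substituting the integral form~\eqref{eq:8} and exchanging the trace with the $\beta$-integral are routine, and the identification of $K_\beta$ is immediate from the definition $K_\beta=(\beta SM(\alpha)S+(1-\beta)I)^{-1}$.
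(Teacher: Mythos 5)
Your proposal is correct and takes essentially the same route as the paper's proof, which simply invokes the chain rule together with linearity of the trace, using the integral representation~\eqref{eq:8} of $D\log$ to identify $K_\beta$. Your write-up merely spells out the details (well-definedness of $\log(SM(\alpha)S)$ via Lemma~\ref{lem:psd}, the inner derivative $S\,\tfrac{\partial M(\alpha)}{\partial\alpha_p}\,S$, and interchanging trace with the $\beta$-integral) that the paper leaves implicit.
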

\begin{proof}
  Simply apply the chain-rule of calculus and use linearity of $\trace(\cdot)$.
\end{proof}
\relax
\begin{lemma}
  \label{lem:inv}
  The Fr\'echet derivative of the map $X \mapsto X^{-1}$ at a point $Z$ in direction $E$ is given by
  \begin{equation}  
    D(Z^{-1})(E)=-Z^{-1}EZ^{-1}.
    \label{eq:7}
  \end{equation}
\end{lemma}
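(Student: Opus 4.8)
The plan is to establish this classical fact by a first-order perturbation argument. I would start from the factorization $Z+E = Z(I + Z^{-1}E)$, valid since $Z$ is invertible, so that $(Z+E)^{-1} = (I+Z^{-1}E)^{-1}Z^{-1}$. For $E$ small enough that $\|Z^{-1}E\| < 1$, the factor $I + Z^{-1}E$ is invertible and admits the convergent Neumann series $(I+Z^{-1}E)^{-1} = \sum_{k\ge 0}(-1)^k(Z^{-1}E)^k = I - Z^{-1}E + (Z^{-1}E)^2 - \cdots$. Right-multiplying by $Z^{-1}$ gives $(Z+E)^{-1} = Z^{-1} - Z^{-1}EZ^{-1} + R(E)$, where $R(E)$ collects all terms of order $k\ge 2$.

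The key step is then to certify that the remainder $R(E)$ is genuinely $o(\|E\|)$, which by definition identifies the bounded linear map $E\mapsto -Z^{-1}EZ^{-1}$ as the Fr\'echet derivative. Since $\|R(E)\| \le \|Z^{-1}\|\sum_{k\ge 2}\|Z^{-1}E\|^k = \|Z^{-1}\|\,\frac{\|Z^{-1}E\|^2}{1-\|Z^{-1}E\|}$, we indeed have $\|R(E)\|/\|E\| \to 0$ as $E\to 0$. This yields exactly the claimed formula $D(Z^{-1})(E) = -Z^{-1}EZ^{-1}$.

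An alternative route, cleaner and free of series estimates, is implicit differentiation of the identity $Z^{-1}Z = I$. Applying the product (Leibniz) rule for Fr\'echet derivatives, the derivative of the right-hand side vanishes because it is constant, while the left-hand side gives $D(Z^{-1})(E)\,Z + Z^{-1}E = 0$; right-multiplying by $Z^{-1}$ solves for $D(Z^{-1})(E) = -Z^{-1}EZ^{-1}$. The only care needed here is to know in advance that $X\mapsto X^{-1}$ is differentiable so that the product rule applies, and this follows from the Neumann-series argument above (equivalently, from the smoothness of inversion on $\genlinear(d)$).

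There is no substantive obstacle; the result is standard. The only points requiring minor attention are the invertibility of $I+Z^{-1}E$ for sufficiently small $E$ (guaranteed since $Z$ is invertible) and the remainder bound that certifies the linear term as the Fr\'echet derivative. This lemma is then used together with the Fr\'echet derivative of the matrix logarithm and Corollary~\ref{cor:log} to differentiate the gradient of $\phi$ and thereby establish the convexity asserted in Theorem~\ref{thm:opt}.
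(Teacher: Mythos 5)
Your proposal is correct, but note that the paper itself offers no proof of this lemma at all: it is stated as a known fact, in the same spirit as Lemma~\ref{lem:psd} (also unproven) and the derivative of the matrix logarithm, for which the paper simply cites Higham's book on matrix functions. So your argument is not a variant of the paper's proof --- it supplies the proof the paper omits. Both of your routes are sound. The Neumann-series expansion $(Z+E)^{-1} = (I+Z^{-1}E)^{-1}Z^{-1} = Z^{-1} - Z^{-1}EZ^{-1} + R(E)$, together with the remainder bound $\|R(E)\| \le \|Z^{-1}\|\,\|Z^{-1}E\|^2/(1-\|Z^{-1}E\|) = o(\|E\|)$, is exactly what is needed to certify $E \mapsto -Z^{-1}EZ^{-1}$ as the Fr\'echet derivative, and it simultaneously establishes the differentiability that your second, slicker argument (differentiating $Z^{-1}Z = I$ via the product rule) must presuppose; you were right to flag that dependency, since the implicit-differentiation route alone only identifies what the derivative must be \emph{if} it exists. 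For the purposes of Theorem~\ref{thm:opt}, where this lemma is used to compute $D_q[(SMS)^{-1}SB_pS]$ in the Hessian calculation, either treatment suffices; your self-contained version simply makes the appendix independent of the external reference.
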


We are now ready to prove Theorem~\ref{thm:opt}.
\begin{proof}[Thm.~\ref{thm:opt}]
We show that the Hessian $\nabla^2\phi(\alpha)\succeq 0$  on $\mathcal A$. To ease presentation, we write $S=X^{-1/2}$,  $M\equiv M(\alpha)=\sum_i\alpha_iB_i$, and let $D_q$ denote the differential operator $D_{\alpha_q}$. Applying this operator to the first-derivative given by Lemma~\ref{lem:grad} (in Section~\ref{sec:sparse_coding_subproblem}), we obtain (using the product rule) the sum
  \begin{equation*}
    \begin{split}
      \trace & \bigl([D_q\log(SMS)] (SMS)^{-1}SB_pS\bigr) \\
      	& + \trace\bigl(\log(SMS)D_q[(SMS)^{-1}SB_pS]\bigr).
    \end{split}
  \end{equation*}
  We now treat these two terms individually. To the first we apply Corr.~\ref{cor:log}. So
  \begin{equation*}
    \begin{split}
      \trace & \bigl([D_q\log(SMS)] (SMS)^{-1}SB_pS\bigr) \\
      &= \sint_0^1\trace(K_\beta{}SB_qSK_\beta{}(SMS)^{-1}SB_pS)d\beta\\
      &=\sint_0^1\trace(SB_q SK_{\beta}(SMS)^{-1}SB_pSK_{\beta}\cdot)d\beta\\
      &=\sint_0^1\ip{\Psi_\beta(p)}{\Psi_\beta(q)}_M d\beta,
    \end{split}
  \end{equation*}
  where the inner-product $\ip{\cdot}{\cdot}_M$ is weighted by $(SMS)^{-1}$ and the map $\Psi_\beta(p) := SB_pSK_{\beta}$. We find a similar inner-product representation for the second term too. Starting with Lemma~\ref{lem:inv} and simplifying, we obtain
  \begin{equation*}
    \begin{split}
      \trace & \bigl(\log(SMS)D_q[(SMS)^{-1}SB_pS]\bigr) \\
      &= -\trace\bigl(\log(SMS)(SMS)^{-1}SB_qM^{-1}B_pS\bigr)\\
      &=\trace\bigl(-S\log(SMS)S^{-1}M^{-1}B_qM^{-1}B_p\bigr)\\
      &=\trace\bigl(M^{-1}B_p[-S\log(SMS)S^{-1}]M^{-1}B_q\bigr).
    \end{split}
  \end{equation*}
  By assumption $\sum_i \alpha_i B_i = M \preceq X$, which implies $SMS \preceq I$. Since $\log(\cdot)$ is operator monotone~\cite{bhatia07}, it follows that $\log(SMS) \preceq 0$; an application of Lemma~\ref{lem:psd} then yields $S\log(SMS)S^{-1} \preceq 0$. Thus, we obtain the weighted inner-product
  \begin{align*}
    \trace & \bigl(M^{-1}B_p[-S\log(SMS)S^{-1}]M^{-1}B_q\bigr) \\
    	&= \ip{M^{-1}B_p}{M^{-1}B_q}_L,
  \end{align*}
  where $L = [-S\log(SMS)S^{-1}] \succeq 0$, whereby $\ip{\cdot}{\cdot}_L$ is a valid inner-product.

  Thus, the second partial derivatives of $\phi$ may be ultimately written as
  \begin{equation*}
    \frac{\partial^2\phi(\alpha)}{\partial\alpha_p\partial\alpha_q} = \ip{\Gamma(B_q)}{\Gamma(B_p)},
  \end{equation*}
  for some map $\Gamma$ and some corresponding inner-product (the map and the inner-product are defined by our analysis above). Thus, we have established that the Hessian is a Gram matrix, which shows it is semidefinite. Moreover, if the $B_i$ are different ($1\le i\le n$), then the Hessian is strictly positive definite.
\end{proof}




{
\bibliographystyle{IEEEtran}
\bibliography{geosp}
}
\end{document}